\newcommand{\calL}{\mathcal{L}}
\def\eqref#1{equation~\ref{#1}}
\def\1{\bm{1}}
\DeclareMathAlphabet{\mathsfit}{\encodingdefault}{\sfdefault}{m}{sl}
\SetMathAlphabet{\mathsfit}{bold}{\encodingdefault}{\sfdefault}{bx}{n}
\def\gL{{\mathcal{L}}}
\def\gM{{\mathcal{M}}}
\def\gP{{\mathcal{P}}}
\def\gT{{\mathcal{T}}}
\def\gU{{\mathcal{U}}}
\newcommand{\E}{\mathbb{E}}
\newcommand{\R}{\mathbb{R}}
\newcommand{\KL}{\mathbb{D}_{\mathrm{KL}}}
\newcommand{\xhdr}[1]{{\noindent\bfseries #1}.}
\newcommand{\cut}[1]{}
\newcommand{\namelong}{\textsc{Generalised Flow Maps}\xspace}
\newcommand{\nameshort}{\textsc{GFM}\xspace}
\newcommand{\nameshorts}{\textsc{GFM}\textnormal{s}\xspace}
\newcommand{\sethree}{\mathrm{SE(3)}}
\newcommand{\sothree}{\mathrm{SO(3)}}
\newcommand{\sethreen}{\sethree^{\scriptscriptstyle N}}
\newcommand{\ddd}{\mathrm{d}}
\newcommand{\diff}{\mathrm{d}}
\newcommand{\sg}{\mathrm{stopgrad}}
\newcommand{\MMD}{\mathrm{MMD}}
\newcommand{\oursesd}{\text{\scshape G-ESD}\xspace}
\newcommand{\ourspsd}{\text{\scshape G-PSD}\xspace}
\newcommand{\ourslsd}{\text{\scshape G-LSD}\xspace}
\newcommand{\oursmf}{\text{\scshape G-MF}\xspace}
\newcommand{\M}{\mathcal{M}}
\newcommand{\T}{\mathcal{T}}
\newcommand{\proj}{\mathrm{proj}}
\newcommand{\lsd}{\mathcal{L}_\mathsf{LSD}(\hat{v})}
\newcommand{\esd}{\mathcal{L}_\mathsf{ESD}(\hat{v})}
\newcommand{\psd}{\mathcal{L}_\mathsf{PSD}(\hat{v})}
\newcommand{\sorth}{\ensuremath{\mathrm{SO}(3)}}
\newcommand\joey[1]{\noindent{\color{orange} {\bf \fbox{Joey}} {\it#1}}}
\newcommand{\nb}[1]{\textcolor{blue}{[nb: #1]}}
\renewcommand*{\backrefalt}[4]{%
    \ifcase #1 \footnotesize{(Not cited.)}%
    \or        \footnotesize{(Cited on page~#2)}%
    \else      \footnotesize{(Cited on pages~#2)}%
    \fi}
\newcommand{\ie}{\emph{i.e.},~}
\newcommand{\eg}{\emph{e.g.},~}
\acrodef{RFM}{Riemannian Flow Matching}
\title{Generalised Flow Maps for Few-Step Generative Modelling on Riemannian Manifolds}
\author{Oscar Davis\textsuperscript{\normalfont 1}\thanks{Correspondence to \texttt{oscar.davis@cs.ox.ac.uk}}~{\normalfont,~} Michael S. Albergo\textsuperscript{\normalfont 2,3,4}{\normalfont,~} Nicholas M. Boffi\textsuperscript{\normalfont 5}, 
Michael M. Bronstein\textsuperscript{\normalfont 1,6}{\normalfont,~}\\
\textbf{Avishek Joey Bose}\textsuperscript{\normalfont 1,7,8}\\
\textsuperscript{\normalfont 1}University of Oxford, \textsuperscript{\normalfont 2}Harvard University, \textsuperscript{\normalfont 3}Kempner Institute,\\\textsuperscript{\normalfont 4}Institute for Artificial Intelligence and Fundamental Interactions, MIT, \\\textsuperscript{\normalfont 5}Carnegie Mellon University, \textsuperscript{\normalfont 6}AITHYRA, \textsuperscript{\normalfont 7}Mila, \textsuperscript{\normalfont 8}Imperial College London
}
\newtheorem{proposition}{Proposition}
\newtheorem{definition}{Definition}
\newtheorem{lemma}{Lemma}
\begin{document}

\maketitle

\begin{abstract}
\looseness=-1
Geometric data 
and purpose-built generative models on them have become ubiquitous in high-impact deep learning application domains, ranging from protein backbone generation and computational chemistry to geospatial data. 
Current geometric generative models remain computationally expensive at inference---requiring many steps of complex numerical simulation---as they are derived from dynamical measure transport frameworks such as diffusion and flow-matching on Riemannian manifolds. In this paper, we propose \namelong (\nameshort), a new class of few-step generative models that generalises the Flow Map framework in Euclidean spaces
to arbitrary Riemannian manifolds. 
We instantiate \nameshorts with three self-distillation-based training methods: Generalised Lagrangian Flow Maps, Generalised Eulerian Flow Maps, and Generalised Progressive Flow Maps. We theoretically show that \nameshorts, under specific design decisions, unify and elevate existing Euclidean few-step generative models, such as consistency models, shortcut models, and meanflows, to the Riemannian setting. We benchmark \nameshorts against other geometric generative models on a suite of geometric datasets, including geospatial data, RNA torsion angles, and hyperbolic manifolds, and   
achieve state-of-the-art sample quality for single- and few-step evaluations, and superior or competitive log-likelihoods using the implicit probability flow.


 \cut{We further demonstrate the flexibility of \nameshorts by introducing a novel parametrization for Lie groups that guarantees manifold constraints without ambient-space projection by exploiting the Lie algebra.}

\end{abstract}

\section{Introduction}
\label{sec:introduction}

\looseness=-1
Dynamical measure transport offers a unifying and prescriptive framework for constructing neural network-based generative models that learn to sample a desired target distribution by pushing forward a tractable prior. 
Numerical solution of the resulting dynamical systems has led to popular method families---including diffusion models~\citep{song2020generativemodelingestimatinggradients}, flow matching~\citep{lipman2022flow,peluchetti2023non,liu2022flow, albergo2022building}, and stochastic interpolants~\citep{stochasticinterpolants}---which together have revolutionised the field and led to state-of-the-art results over continuous modalities~\citep{karras2024analyzing,polyak2024movie}. 
While often applied to Euclidean data such as images, this powerful paradigm naturally extends to data types that are inherently \emph{geometric} and lie on a known Riemannian manifold, in which case the associated flows and diffusions are defined directly on the manifold~\citep{huang2022riemannian,de2022riemannian,rfm}.
Such geometric generative models have found widespread application in high-impact scientific settings such as rational drug design on the $\sethreen$ manifold of protein backbones~\citep{se3sfm,huguet2024sequence,watson2023novo}, generative material design in computational chemistry~\citep{miller2024flowmm}, and even discrete data using the Fisher-Rao geometry on the probability simplex~\citep{fisher,categoricalfm}.

\looseness=-1
The scalability of dynamical transport-based generative models arises from their use of simple regression-based objectives that lead to rapid \emph{simulation-free} training.
However, unlike training, obtaining high-quality generated samples at inference time requires numerical solution of the parametrised dynamical system, which necessitates evaluating the large learnt model numerous times.
The computational complexity of inference is further burdened in geometric settings, where each step of simulation requires computing potentially numerically unstable operators that must guarantee manifold constraints for faithful inference. 
For instance, for matrix Lie groups like $\sothree$, computing the standard exponential and logarithmic maps requires truncating an infinite matrix power series~\citep{al2012improved}, which may incur additional sources of error beyond just step-size discretisation error compared to inference on Euclidean spaces. Therefore, reducing the number of Riemannian operations, while maintaining this powerful inductive bias, has the potential to lead to higher fidelity generated samples.

\begin{wrapfigure}[16]{r}{0.5\textwidth}
  \centering
  \vspace{-4mm}
  \includegraphics[width=\linewidth]{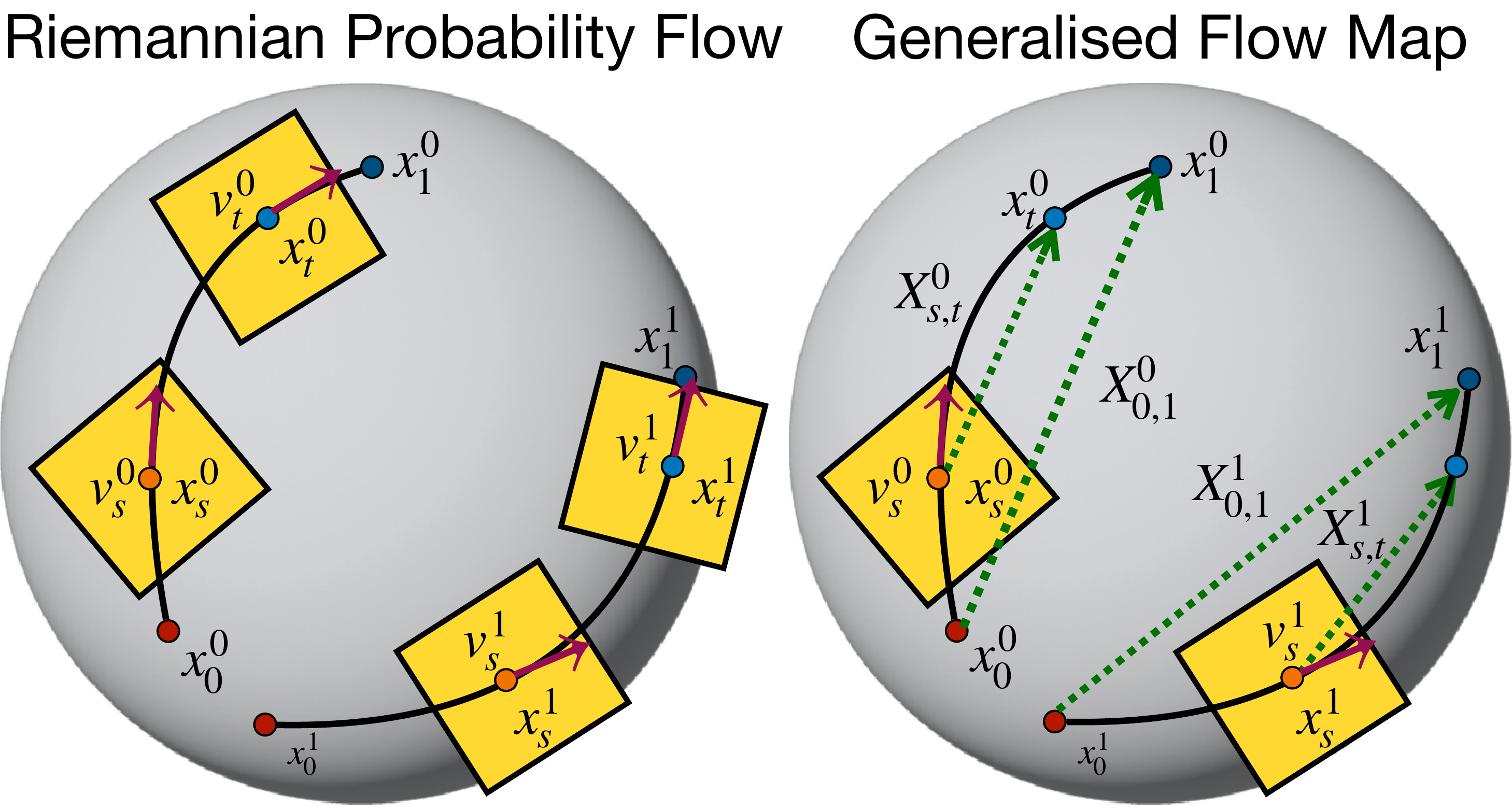}
  \caption{\small \looseness=-1 \textbf{Left:} The instantaneous vector field $v_t$ of the Riemannian probability flow ODE.  \textbf{Right:} The Generalised flow map $X_{s,t}$ that jumps along the trajectory of the Riemannian probability flow ODE by steps of size $t-s$. Both models are depicted on $\mathbb{S}^2$.}
  \label{fig:intro_fig}
\end{wrapfigure}

\looseness=-1
The search for accelerated inference with dynamical generative models defined on Euclidean spaces has spurred the development of fast inference techniques that fall broadly into two methodological families.
The first family involves the intricate design of higher-order samplers~\citep{dockhorn2022genie,zhang2022fast,sabour2024align,karras2022elucidating}, which can produce high-quality samples with less than ten steps.
The second pursues direct learning of the flow map---the global solution of the governing differential equation---rather than the instantaneous vector field used in standard simulation~\citep{boffi2024flow}. 
This flow map perspective underlies distillation-based approaches, the most popular of which include consistency models~\citep{song2023consistency,song2023improved,lu2024simplifying}, which attempt to learn the one-step flow map of a pre-trained teacher.
Distillation-based methods have also been extended to the multi-step flow map setting we consider here at scale~\citep{ayf}.
Recent advances have further demonstrated that state-of-the-art one-step generative models can be trained from scratch without a pre-trained teacher through self-consistent \textit{self-distillation}~\citep{flowmaps,shortcut} or Monte Carlo direct training~\citep{meanflows} objectives.
These exciting developments naturally raise the question: 
\begin{center}
\vspace{-5pt}
    \textit{Can the same flow map principles be extended to Riemannian settings, leading to a new class of few-step geometric generative models?}
\vspace{-5pt}
\end{center}
Here, we answer this question in the affirmative by introducing \namelong (\nameshort), a method that generalises and extends flow maps~\citep{flowmaps} to arbitrary Riemannian manifolds.
Overall, our main contributions can be summarised as:

\cut{
Data that can be represented on a Riemannian manifold is ubiquitous, and examples include protein backbones on $\mathrm{SE}(3)^N$~\citep{se3sfm}, geological data on $\mathbb{S}^2$~\citep{riemCNF}, and discrete data using the Fisher-Rao probability simplex~\citep{fisher,categoricalfm}. Currently, the state-of-the-art methods for modelling these~\citep{rfm,scoreBasedRiemannian,riemannianDiffusion} involve, at inference time, integrating a learnt vector field, requiring either many (100 or more) integration steps or adaptive samplers, and in some cases even considering stochastic calculus on Riemannian manifolds~\citep{riemannianDiffusion,scoreBasedRiemannian}, which is non-trivial in theory and in practice, as closed-form solutions often lack and are replaced by approximations introducing biases. As well, each sampling step typically involves a Riemannian operation, such as the exponential or logarithmic map; each step suffers from the inaccuracies induced by its numerical imprecision, especially in high dimensions. Therefore, reducing the number of Riemannian operations, while maintaining this powerful inductive bias, should lead to higher fidelity samples. Concurrently, on Euclidean spaces, recent advances have shown how to stably train state-of-the-art generative models from scratch that only require a few (usually less than ten, and down to one) sampling steps~\citep{flowmaps,meanflows,imm,ayf}.
}

\cut{%
\looseness=-1
\xhdr{Current work}
In this work, we propose \namelong (\nameshort), a method that generalises and extends flow maps~\citep{flowmaps} to arbitrary Riemannian manifolds. 
We first theoretically demonstrate the necessary technical considerations required to respect manifold constraints, and leading to the development of three variants of Generalised flow maps: Eulerian, Lagrangian, and Progressive.
Given each \nameshort instantiation, we elevate the self-distillation losses presented in~\citet{flowmaps} to the Riemannian setting, recovering their Euclidean counterparts as specific cases. 
We further demonstrate the flexibility of \nameshort for matrix Lie groups, offering a novel parametrization that enables learning the flow map on the more numerically favorable Lie algebra.
This allows new regularization schemes to enforce soft-equivariance inductive biases by penalizing the adjoint action of the Lie group. 
Empirically, we comprehensively demonstrate the performance benefits of \nameshort over previous Riemannian generative models on standard datasets supported on Riemannian manifolds. Specifically, we demonstrate state-of-the-art performance for single- and few-step sample quality on torsion angles found in RNA backbones and protein side-chains, geospatial data of natural disasters on the Earth's surface, synthetic data on $\sothree$ and hyperbolic space, and discrete data on the probability simplex.
}

\begin{enumerate}[topsep=0pt, partopsep=0pt, itemsep=0pt, parsep=0pt, leftmargin=*]
    \item \textbf{Models}. 
    We derive the technical conditions required for a flow map to respect manifold constraints, showing how they lead to three variants of \nameshort: Eulerian, Lagrangian, and Progressive.
    \item \textbf{Learning}.
    We elevate the self-distillation losses presented in~\citet{flowmaps} to the Riemannian setting, showing how each variant can be learned efficiently, and how each recovers their Euclidean counterparts as a specific case.
   \cut{ %
    \item \textbf{Efficiency.} 
    We enhance the flexibility of \nameshort over matrix Lie groups, offering a novel parametrization that learns the map on the more numerically favorable Lie algebra.}
    %
    \item \textbf{Empirical validation}. 
    We comprehensively demonstrate the performance benefits of \nameshort over previous Riemannian generative models on standard geometric datasets. 
    Specifically, we obtain state-of-the-art performance for single- and few-step sample quality on torsion angles found in RNA backbones and protein side-chains, geospatial data of natural disasters on the Earth's surface, synthetic data on $\sothree$ and on the Poincar\'e disk model of hyperbolic geometry, demonstrating our method's robustness to manifolds with non-trivial structures.
\end{enumerate}
Our methodology opens the door to an under-explored design space of highly expressive generative models defined over geometric data that enjoy rapid inference.
%
%
%

\cut{
We present three flavours of self-distillation losses (\ie, losses that use older parameters as teacher models~\citep{consistency,sCM,ayf,meanflows,imm}) that recover the three methods proposed by flow maps on Euclidean spaces. Moreover, our experiments demonstrate the capabilities of our method on a host of datasets supported on Riemannian manifolds, and exhibit state-of-the-art performance for single- and few-step sample quality while achieving competitive negative log-likelihood values.
}

\section{Background}
\label{sec:background}

\subsection{Riemannian geometry}
\looseness=-1
Informally, a $d$-dimensional manifold $\gM$ is a topological space covered by a family of sets and corresponding maps, called \emph{charts}, that render the manifold locally homeomorphic to $\R^d$. Requiring a $C^{\infty}$ differential structure on the charts makes the manifold \emph{smooth}, and thereby naturally defines the tangency of a vector $v \in \T_x \M$ to a point $x \in \M$, where $\gT_x \M$ denotes the tangent space at point $x$, and the disjoint union of all tangent spaces $\mathcal{T}\mathcal{M} \coloneqq \bigsqcup_{x \in \mathcal{M}} \mathcal{T}_x \mathcal{M}$ forms the \emph{tangent bundle}.  

\looseness=-1
A \emph{Riemannian manifold} is a pair $(\mathcal{M}, g)$, where $\M$ is a smooth manifold, and $g$ is a
\emph{metric tensor}, which smoothly assigns a bilinear and symmetric positive semi-definite inner product $
g_x : \mathcal{T}_x \mathcal{M} \times \mathcal{T}_x \mathcal{M} \to \mathbb{R}$. This structure induces notions of curve lengths, 
distances, volumes, and angles. We assume $\mathcal{M}$ is connected,
orientable, and complete, with volume element $\ddd x$. A \emph{vector field} is a function 
from $\M$ to $\T\M$, such that $x\mapsto v(x) \in \T_x\M$. For a continuously differentiable map, $f:\M\to\M$, we define its \emph{differential} at $x \in \M$, $\ddd f_x:\T_x\M\to \T_{f(x)}\M$, as the map given by $\ddd f_x:v\mapsto (f \circ \gamma)'(0)$ for \emph{any} continuously differentiable curve $\gamma:[0,1]\to\M$, such that $\gamma(0) = x$ and $\gamma'(0) = v$. The resulting map between tangent spaces is independent of $\gamma$.

\cut{
Each $\T_p\M$ being a vector space, endowing it with an inner product, $g_p$, makes $\T_p\M$ an inner product space. If $g_p$ is positive semi-definite, symmetric, and bilinear, and varies smoothly with $p$, then the pair $(\M, g)$ is said to be a Riemannian manifold, and $g$ is a metric. This metric, in turn, defines, on the manifold, the distance between two points, angles, and lengths of vectors. The exponential map, $\exp_p:\T_p\M\to\M$, generalises $p + v$, and amounts to taking a step in the direction of the vector $v$. The logarithmic map, $\log_p:\M\to\T_p\M$, generalises $q-p$, and provides a vector pointing towards its argument. The shortest path/geodesic between two points $p,q \in \gM$ is the curve $\gamma:[0,1]\to \gM$, with $\gamma(0)=p$ and $\gamma(1)=q$, that minimises the energy functional, $\gamma\mapsto  \frac 12 \int_0^1 \left\lVert\dot{\gamma}(t)\right\lVert_{\gamma(t)}^2 \diff t$. }

\looseness=-1
\xhdr{Manifold operations} In the context of this work, we require a few usual operations on Riemannian manifolds. Specifically, we require a tangential projection from the ambient space, $\R^d$, to any arbitrary tangent plane $\T_x\M$ for $x\in\M$. The tangential projection enjoys a closed form: $P_x = I - \frac{n_x n_x^\top}{\|n_x\|^2}$, where $n_x \in \gT_x\gM$ is a basis vector.
Roughly, the exponential map, $\exp_x:\T_x\M\to\M$, generalises the Euclidean notion of ``$x + v$'', and amounts to taking a step in the direction of the vector $v$. 
The logarithmic map, $\log_x:\M\to\T_x\M$, generalises ``$q-x$'' and provides a vector pointing towards its argument. Finally, the shortest path between two points $p,q \in \gM$ is called a \emph{geodesic} and is the curve $\gamma:[0,1]\to \gM$, with $\gamma(0)=p$ and $\gamma(1)=q$. 

\subsection{Stochastic Interpolants and Flow Maps}
\label{sec:background_flow_maps}

\looseness=-1
We are interested in the dynamical measure transport problem, which seeks to transport an easy-to-sample reference measure $\rho_0$ to a specified target measure $\rho_1$. As Riemannian manifolds can be embedded within a larger ambient space, $\gM \subseteq \R^d$, we consider both $\rho_0, \rho_1 \in \gP(\gM) \subset \gP(\R^d)$. 

%
\looseness=-1
\xhdr{Stochastic interpolants in $\R^d$}
In the context of generative modelling, a target distribution of interest is provided in the form of a dataset of samples, $\mathcal D = \{x_1^i\}_{i=1}^n$, where $x_1^i \overset{\text{iid}}{\sim} \rho_1(x_1)$, thus defining an empirical distribution, $\rho_{\text{data}}(x_1) \coloneqq \frac 1 n \sum^n_i \delta (x_1 - x^i_1)$.
The goal in generative modelling is to build a model $\rho^{\theta}_{1}$, with parameters $\theta \in \Theta \subset \R^m$, that approximates $\rho^{\theta}_1 \approx \rho_1$ in the distributional sense (\eg $W_2(\rho_1, \rho^{\theta}_1)$, or $\KL(\rho_1\mid \mid \rho^{\theta}_1)$) while allowing us to draw new samples.

\looseness=-1
A modern and scalable approach for solving the dynamical measure transport problem---and by extension the generative modelling problem---is to leverage the framework of stochastic interpolants~\citep{stochasticinterpolants}. A stochastic interpolant is a stochastic process $I:[0,1]\times\R^d\times\R^d\to\R^d$ that combines, in time, samples from the reference and target measures $(t,x_0,x_1)\mapsto I_{t}(x_0, x_1) = \alpha_t x_0 + \beta_t x_1$, for some choice of continuously differentiable functions $\alpha$ and $\beta$, such that $\alpha_0 = 1$, $\alpha_1 = 0$, $\beta_0 = 0$ and $\beta_1 = 1$. In addition, drawing pairs $(x_0, x_1) \sim \rho(x_0, x_1)$ from a given coupling must further satisfy the following marginal constraints $\int \rho(x_0, x_1)\ddd x_0 = \rho(x_1)$ and $\int \rho(x_0, x_1)\ddd x_1 = \rho(x_0)$. Connecting the measures $\rho_0$ and $\rho_1$, the interpolant defines a probability path, $(\rho_t)_{t\in[0,1]}$, which also follows the corresponding probability flow and transports particles using the following ordinary differential equation (ODE):
\begin{align}
    \partial_t{x}_t &= \E_{x_t \sim \rho_t(x_t)}\left[\partial_t{I}_{t} \mid I_{t} = x_t\right] = v_{t}(x_t), \quad \text{with }x_0 \sim \rho_0(x_0).
    \label{eq:probflow}
\end{align}
\looseness=-1
Setting $\alpha_t \equiv 1-t$ and $\beta_t \equiv t$ recovers well known methods such as flow-matching~\citep{albergo2022building,lipman2022flow} and rectified flow~\citep{liu2022flow}, while imposing optimal transport costs on the coupling results in OT-CFM~\citep{tong2023improving}.

\looseness=-1
\xhdr{Flow maps in $\R^d$} An alternative path to generating samples is to instead learn the integrator of the probability flow ODE directly, in order to avoid, at inference time, the costly numerical integration. Introduced in~\citet{boffi2024flow}, flow maps are functions $X:[0,1]^2\times\R^d\to\R^d$ that satisfy the \emph{jump condition}: $X_{s,t}(x_s) = x_t$, where $(x_t)_{t\in[0,1]}$ is any solution of the probability flow. One can thus sample from $\rho_1$ by first sampling $x_0\sim\rho_0$ and then applying $X_{0,1}(x_0)\sim \rho_1$. These are parametrised as $X^{\theta}_{s,t}(x_s) = x + (t-s) v^{\theta}_{s,t}(x_s)$, for any $x$, $s$ and $t$, so that the boundary condition, at $t = s$, $X^{\theta}_{t,t} = \mathrm{Id}$ is automatically satisfied. There are various self-distillation (\ie from scratch) objectives for learning a flow-map, which are used in conjunction with typical flow-matching objectives~\citep{stochasticinterpolants,tong2023improving,flowmatching},
For brevity, the three (Euclidean) flow map self-distillation loss variations---Eulerian, Lagrangian, and progressive---are provided in~\S\ref{app:flow_map_losses_original}.

\cut{
\xhdr{Riemannian manifolds} We note that dynamical flow-based sampling has already been properly defined on Riemannian manifolds in~\citet{rfm} and \cref{eq:probflow} extends naturally as long as $\rho_0$ and $\rho_1$ are supported on $(\M, g)$, and $v:\M\times[0,1]\to \T\M$. \joey{I would remove this. This is also not entirely correct. There was work (long) before this on flow-based models on manifolds.}

}
\section{Method}

\looseness=-1
We seek to define a generative model over arbitrary Riemannian manifolds that enables accelerated inference through few-step sampling. Towards this goal, we generalise the notion of flow maps to Riemannian manifolds, yielding \namelong (\nameshort) in~\S\ref{sec:gfm_main}. We then show how to train such \nameshort from scratch using self-distillation losses in~\S\ref{sec:riemannian_self_distillation}, defined for arbitrary Riemannian manifolds---recovering the Euclidean case of~\citet{flowmaps} as a special case. 



\subsection{Generalised Flow Maps}
\label{sec:gfm_main}

\looseness=-1
We begin by defining a flow map in the context of Riemannian manifolds $\gM$. To do so, we first recall that the flow map allows us to jump along the trajectory of the probability flow ODE connecting two measures $\rho_0, \rho_1 \in \gP(\gM)$.
The transport described by this ODE can be written in terms of the corresponding Riemannian flow and continuity equations:
\begin{align}
    \begin{array}{rl}
        \partial_t x_t  & = v_t( x_t)\\
    \partial_t \rho_t (x) & = - \mathrm{div}_g \left( \rho_t (x)v_t(x) \right),
    \end{array}
    \label{eq:prob_flow_on_manifolds}
\end{align}
\looseness=-1
with $x_0\sim \rho_0$, and where $\mathrm{div}_g$ denotes the Riemannian divergence induced by the metric $g$. As the probability flow ODE lives on  $\gM$, this presents an immediate point of departure from Euclidean spaces: the interpolant must trace a curve on $\gM$. Therefore, the generalised interpolant, $I: [0,1] \times \gM^2 \to \gM$, satisfying the same endpoint constraints, $I_0( x_0, x_1) = x_0$ and $I_1(x_0, x_1) = x_1$, may not any more be any arbitrary linear combination of $x_0$ and $x_1$, as it may not belong to the manifold of interest. Consequently, we parametrise the interpolant as the geodesic connecting the points $x_0$ and $x_1$, $I:(t, x_0, x_1)\mapsto \exp_{x_0}(\alpha_t \log_{x_0}(x_1))$, with $\alpha_0 = 0$ and $\alpha_1 = 1$~\citep{rfm}. This chosen form of $I_t$ also enables us to write the vector field of the flow in~\eqref{eq:prob_flow_on_manifolds} as $\partial_t I_t(x_0, x_1) = \alpha_t'\log_{x_t}(x_1) / (1-\alpha_t)\in \gT_{x_t}\M$. Note that when the manifold is an Euclidean space (\ie $\gM = \R^d$), we recover~\citet{albergo2022building}.

\looseness=-1
Given the stochastic interpolant, we now define \nameshort that jumps along the trajectory of~\eqref{eq:prob_flow_on_manifolds}.

\begin{mdframed}[style=MyFrame2]
\begin{definition}
\label{def:gen_flow_map}
\looseness=-1
(\namelong) Let $(\M,g)$ be a Riemannian manifold, and let $\rho_0$ and $\rho_1$ be two distributions on $(\M,g)$. The generalised flow map is the unique function $X:[0,1]^2\times\M\to\M$, such that, for any solution $(x_t)_{t\in [0,1]}$ of \cref{eq:prob_flow_on_manifolds}, and any $(s,t)\in[0,1]^2$, $X_{s,t}(x_s) = x_t$.
\end{definition}
\end{mdframed}

Analogous to the Euclidean case, \nameshort enables one-step sampling by first sampling $x_0\sim\rho_0(x_0)$ and then applying $X_{0,1}(x_0)\sim\rho_1$. A natural parametrisation for constructing the \nameshort is 
$X_{s,t}(x_s) = \exp_{x_s}\left((t-s)v_{s,t}(x_s)\right)$, with the underlying vector field $v: [0,1]^2 \times \M \to \T_{x_s} \M $.
This automatically satisfies the boundary condition $X_{s, s}(x_s) = x_s$, since $\exp_{x_s}(\vec 0) = x_s$. 
We can thus characterise a \nameshort in three different ways, generalising the Euclidean case. 
%
%

\begin{mdframed}[style=MyFrame2]
\begin{restatable}{proposition}{propgfm}
\label{prop:gfm_characterization}
Let $X_{s,t}$ be parametrised as $X_{s,t}(x) = \exp_{x}\left( (t-s) v_{s,t}(x)\right)$. 
Then $X_{s,t}$ is the unique \nameshort for~\eqref{eq:prob_flow_on_manifolds} if and only if it satisfies any of the following conditions:
\begin{enumerate}[topsep=0pt, partopsep=0pt, itemsep=0pt, parsep=0pt, leftmargin=*, label=(\roman*)]
\item Generalised Lagrangian Condition:
\begin{equation}
    \forall (s,t)\in[0,1]^2,x_s \in \M,\qquad\partial_t X_{s,t}(x_s) = v_{t}(X_{s,t}(x_s)), 
    \label{eq:generalised_lagrangian_condition}
\end{equation}
\item Generalised Eulerian Condition:
\begin{equation}
    \forall (s,t)\in[0,1]^2,x_s\in \M\qquad\partial_s X_{s,t}(x_s) + \ddd (X_{s,t})_{x_s}[v_{s}(x_s)] = 0, 
    \label{eq:generalised_eulerian_condition}
\end{equation}

\item Generalised Semigroup Condition:
\end{enumerate}
\begin{equation}
    \forall(s,t,u) \in [0,1]^3, x_s\in\M,\qquad X_{ u, t}\left(X_{s, u}(x_s)\right) = X_{s,t}(x_s).
    \label{eq:generalised_semi_group_condition}
\end{equation}
\end{restatable}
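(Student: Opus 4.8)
The plan is to reduce the whole statement to the basic well-posedness of \cref{eq:prob_flow_on_manifolds}. Under the standing smoothness assumptions on $v_t$ (enough for a unique integral curve through each point) and completeness of $\M$, integral curves of \cref{eq:prob_flow_on_manifolds} exist, are unique, and can be started from any $x\in\M$ at any time $s$; in particular the \nameshort of \cref{def:gen_flow_map} both exists (evolve along the curve) and is unique (two \nameshorts would give two integral curves from the same start). I would then show that each of the three conditions is equivalent to the defining property ``$X_{s,t}(x_s)=x_t$ along every integral curve''. Throughout I use that the parametrisation $X_{s,t}(x)=\exp_x((t-s)v_{s,t}(x))$ forces $X_{s,s}=\mathrm{Id}$ and, since $(\ddd\exp_x)_{\vec 0}=\mathrm{Id}$, the diagonal derivatives $\partial_t X_{s,t}(x)\big|_{t=s}=v_{s,s}(x)$ and $\partial_s X_{s,t}(x)\big|_{s=t}=-v_{t,t}(x)$, together with the compatibility $v_{t,t}=v_t$ needed for the parametrisation to represent \cref{eq:prob_flow_on_manifolds} near the diagonal (this identity is in any case forced the instant any of the three conditions holds at $t=s$, so no circularity is introduced). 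One also checks en route that each derivative lands in the correct tangent space, so the sums in \cref{eq:generalised_lagrangian_condition} and \cref{eq:generalised_eulerian_condition} are well defined; that part is routine.

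\emph{(i) Lagrangian $\Leftrightarrow$ \nameshort, and (ii) Eulerian $\Leftrightarrow$ \nameshort.} For (i): if $X$ is the \nameshort, fix $x_s$, let $(x_t)$ be the integral curve through it, and differentiate $X_{s,t}(x_s)=x_t$ in $t$ to get \cref{eq:generalised_lagrangian_condition}; conversely, for fixed $s,x_s$ the curve $t\mapsto X_{s,t}(x_s)$ solves \cref{eq:prob_flow_on_manifolds} by \cref{eq:generalised_lagrangian_condition} and starts at $X_{s,s}(x_s)=x_s$, so uniqueness gives $X_{s,t}(x_s)=x_t$. For (ii): if $X$ is the \nameshort, fix a target time $t$ and an integral curve $(x_r)$; then $r\mapsto X_{r,t}(x_r)$ is constantly $x_t$, and differentiating in $r$ by the chain rule — the base point $x_r$ moves with velocity $v_r(x_r)$, which is exactly what produces the term $\ddd(X_{s,t})_{x_s}[v_s(x_s)]$ — yields \cref{eq:generalised_eulerian_condition}; conversely, if \cref{eq:generalised_eulerian_condition} holds then, along any integral curve, $s\mapsto X_{s,t}(x_s)$ has zero derivative, hence is constant and equals its value $X_{t,t}(x_t)=x_t$ at $s=t$, so $X$ is the \nameshort. (Since every point of $\M$ lies on some integral curve, these arguments deliver the conditions for all $x_s\in\M$ and all $(s,t)$, running time backward when $t<s$.)

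\emph{(iii) Semigroup $\Leftrightarrow$ \nameshort.} If $X$ is the \nameshort, then along the integral curve through $x_s$ we have $X_{s,u}(x_s)=x_u$ and $X_{u,t}(x_u)=x_t=X_{s,t}(x_s)$, which is \cref{eq:generalised_semi_group_condition}. Conversely, fix $s,x_s$ and set $\phi(t)=X_{s,t}(x_s)$; applying \cref{eq:generalised_semi_group_condition} with intermediate time $t$ and final time $t+h$ gives $\phi(t+h)=X_{t,t+h}(\phi(t))=\exp_{\phi(t)}(h\,v_{t,t+h}(\phi(t)))$, and expanding in $h$ (using $(\ddd\exp_x)_{\vec 0}=\mathrm{Id}$, continuity of $v$, and $v_{t,t}=v_t$) yields $\dot\phi(t)=v_t(\phi(t))$ with $\phi(s)=x_s$, so $\phi(t)=x_t$ by uniqueness and $X$ is the \nameshort. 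Equivalently, one may differentiate \cref{eq:generalised_semi_group_condition} in $u$, evaluate at $u=t$, and use the diagonal-derivative facts to read off \cref{eq:generalised_lagrangian_condition} directly, then invoke case (i).

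The main obstacle is precisely the converse direction for the semigroup condition: one must pass from the purely algebraic composition law \cref{eq:generalised_semi_group_condition} to a genuine differential equation and, crucially, identify its generator with the probability-flow velocity $v_t$ rather than merely with the diagonal $v_{t,t}$ of the parametrised field — i.e.\ this is the one place the parametrisation has to be tied back to \cref{eq:prob_flow_on_manifolds}, and carrying this out cleanly (via the compatibility forced at $t=s$, so that the argument is not circular) is the delicate point. Everything else is bookkeeping around existence and uniqueness of integral curves on a complete Riemannian manifold and tracking tangent-space membership of the various differentials.
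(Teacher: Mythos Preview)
Your proposal is more complete than the paper's own proof: the paper only establishes the forward direction (that the \nameshort satisfies each of the three conditions), proving the tangent-space membership lemmas you call ``routine'' and then differentiating the defining relation, while not addressing the converses at all. You supply both directions via existence and uniqueness of integral curves, which is the right mechanism. Your forward arguments for (i) and (iii) coincide with the paper's; for (ii) you differentiate $r\mapsto X_{r,t}(x_r)$ along an integral curve, whereas the paper differentiates the inverse relation $X_{s,t}(X_{t,s}(x))=x$ in $s$ and then relabels --- the two are the same chain-rule computation viewed from opposite ends.

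The one place your argument does not close is exactly where you flag it: the converse for (iii). Your proposed fix --- that the semigroup condition evaluated at $t=s$ forces $v_{t,t}=v_t$ --- fails, because on the diagonal the semigroup law reduces to $X_{u,s}\circ X_{s,u}=\mathrm{Id}$, which carries no information about $v_{t,t}$. Your expansion of $\phi(t+h)=X_{t,t+h}(\phi(t))$ correctly yields $\dot\phi(t)=v_{t,t}(\phi(t))$, not $v_t(\phi(t))$, and without an independent link between $v_{t,t}$ and the drift of \cref{eq:prob_flow_on_manifolds} you cannot conclude that $\phi$ is the right integral curve. Indeed, any two-parameter family $\exp_x((t-s)w_{s,t}(x))$ arising as the flow of some \emph{other} vector field $w_t$ satisfies both the semigroup law and the boundary condition while failing to be the \nameshort for $v_t$. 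This is less a flaw in your reasoning than a genuine gap in the proposition as literally stated for (iii); in the paper's actual training objectives the missing constraint $v_{t,t}=v_t$ is supplied externally by the RFM loss on the diagonal (\cref{eq:rfm}, \cref{eq:general_rfm_gfm}), so the combined loss does pin down the \nameshort even though the bare semigroup condition cannot. Your instinct to single this out as the delicate point is correct; the honest resolution is to add $v_{t,t}=v_t$ as a hypothesis for the converse of (iii) rather than to claim it is a consequence.
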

\end{mdframed}
\looseness=-1
We include the proofs of this proposition in~\S\ref{app:legality_proof} alongside proofs of the legality of the claims. As stated in~\cref{prop:gfm_characterization}, the extension to Riemannian manifolds requires the use of manifold operations such as the differential in place of the Euclidean gradient. Moreover, instantaneously, the \nameshort recovers the vector field in its derivative and defines an implicit flow.

\begin{mdframed}[style=MyFrame2]
\begin{restatable}{lemma}{generalisedtangentcondition}
(Generalised Tangent Condition)
Let $v_t(x_t) = \E_{x_t \sim \rho_t(x_t)}[\partial_t I_t\mid I_t = x_t]$ for any $t$ and $x_t \in \gM$ be the drift of~\eqref{eq:prob_flow_on_manifolds}. Then it holds that $\lim_{s\to t} \partial_t X_{s,t}(x_s) = v_{t}(x_t)$.
\label{lemma:generalisedtangentcondition}
\end{restatable}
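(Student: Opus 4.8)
The plan is to extract the limit by differentiating the chosen parametrisation $X_{s,t}(x_s) = \exp_{x_s}\!\big((t-s)\,v_{s,t}(x_s)\big)$ in the variable $t$ and then letting $s \to t$, using only elementary properties of the Riemannian exponential map together with the continuity of the probability-flow trajectory $s \mapsto x_s$. The computation will in fact do slightly more than the statement asks: it will exhibit the diagonal restriction $v_{t,t}$ of the parametrising field as equal to the drift $v_t$ along the trajectory, which is the real content of the ``tangent condition.''

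First I would fix $s$, set $w(s,t) \coloneqq (t-s)\,v_{s,t}(x_s) \in \T_{x_s}\M$, so that $X_{s,t}(x_s) = \exp_{x_s}(w(s,t))$, and differentiate through $\exp_{x_s}$ (holding the base point $x_s$ fixed), obtaining
\[
\partial_t X_{s,t}(x_s) = \ddd(\exp_{x_s})_{w(s,t)}\big[\, v_{s,t}(x_s) + (t-s)\,\partial_t v_{s,t}(x_s)\,\big].
\]
Then I would pass to the limit $s \to t$. Three ingredients make it collapse: (i) completeness of $(\M,g)$ guarantees a well-defined, continuous trajectory, so $x_s \to x_t$ (in the ambient $\R^d$); (ii) continuity, hence local boundedness, of $v_{s,t}$ and of $\partial_t v_{s,t}$ near the diagonal $\{s = t\}$ forces $w(s,t) \to \vec 0$ and $(t-s)\,\partial_t v_{s,t}(x_s) \to \vec 0$, so the bracketed tangent vector converges to $v_{t,t}(x_t)$; (iii) the exponential map is smooth with $\ddd(\exp_y)_{\vec 0} = \mathrm{Id}_{\T_y\M}$ for every $y \in \M$, and $(y,w) \mapsto \ddd(\exp_y)_w$ depends continuously on its arguments. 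Combining, $\lim_{s \to t} \partial_t X_{s,t}(x_s) = \ddd(\exp_{x_t})_{\vec 0}\big[v_{t,t}(x_t)\big] = v_{t,t}(x_t)$.

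It then remains to identify $v_{t,t}(x_t)$ with the drift $v_t(x_t)$. Here I would invoke \cref{prop:gfm_characterization}: since $X_{s,t}$ is the \nameshort, it satisfies the Generalised Lagrangian Condition \cref{eq:generalised_lagrangian_condition}, namely $\partial_t X_{s,t}(x_s) = v_t\!\big(X_{s,t}(x_s)\big)$ for all $(s,t)$; letting $s \to t$ and using $X_{t,t} = \mathrm{Id}$ together with continuity of $v_t$ gives $\lim_{s \to t} \partial_t X_{s,t}(x_s) = v_t(x_t)$, which is the claim and, combined with the previous step, yields $v_{t,t}(x_t) = v_t(x_t)$. (Equivalently, one may bypass this step entirely: by \cref{def:gen_flow_map}, $\tau \mapsto X_{s,\tau}(x_s)$ is the trajectory of \cref{eq:prob_flow_on_manifolds} through $x_s$ at time $s$, so $\partial_t X_{s,t}(x_s) = v_t(x_t)$ for \emph{every} $s$; but the computational route above is what exposes $v_{t,t}=v_t$.)

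The main obstacle is not conceptual but a matter of regularity bookkeeping: justifying the interchange of the $s \to t$ limit with the differential of $\exp$ in the second step. This requires recording as standing hypotheses (a) joint continuity of $(s,t,x) \mapsto v_{s,t}(x)$ and local boundedness of $\partial_t v_{s,t}$ near $s = t$ --- e.g.\ taking $v$ of class $C^1$ --- and (b) the smooth dependence of $\ddd\exp$ on base point and tangent vector, which holds on a complete manifold. Once $\ddd(\exp_y)_{\vec 0} = \mathrm{Id}$ is in hand, the geometry does the rest and no genuine estimate is needed.
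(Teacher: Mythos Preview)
Your proof is correct, and in fact your parenthetical ``bypass'' at the very end \emph{is} the paper's proof of this lemma: the paper simply invokes the Lagrangian condition $\partial_t X_{s,t}(x_s) = v_t(X_{s,t}(x_s))$ and passes to the limit using continuity of $v_t$ and $X_{s,t}(x_s)\to x_t$. The differentiation through $\exp_{x_s}$ that you make your main line of argument is what the paper proves as a \emph{separate} lemma (\cref{lemma:der_flowmap}), stated for a fixed base point $x$ rather than the moving $x_s$. What you gain by fusing the two is the explicit identification $v_{t,t}=v_t$ inside the same proof; the cost is that, with $x_s$ varying, you must appeal to joint continuity of $(y,w)\mapsto \ddd(\exp_y)_w$, whereas the paper's fixed-$x$ version of the exponential-map computation sidesteps that entirely.
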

\end{mdframed}
\looseness=-1
The proof for~\cref{lemma:generalisedtangentcondition} is provided in~\S\ref{app:lemma_generalised_tangent_condition_proof} and also illustrated in~\cref{fig:intro_fig}. The lemma underscores the key idea that, for $s = t$ (on the diagonal of the times space, $[0,1]^2$), the derivative of the \nameshort is the instantaneous vector field, $v_t$ of~\eqref{eq:prob_flow_on_manifolds}.
Moreover, from the parametrisation in~\cref{prop:gfm_characterization}, and from the fact that $\lim_{s\to t}\partial_tX_{s,t}(x) = v_{t,t}(x)$ for any $x$, it follows that $v_{t,t} = v_t$ for any $t \in [0,1]$. (We thoroughly prove the limit in~\cref{lemma:der_flowmap}.)
Therefore, we hereinafter use $v_{t,t}$ and $v_t$ interchangeably, and we may train the \nameshort on the diagonal $s=t$ using Riemannian Flow Matching (RFM)~\citep{rfm}, the loss of which is given by:
\begin{equation}
    \gL_{\mathrm{RFM}}(\theta) = \E_{(x_0, x_1) \sim \rho(x_0,x_1), x_t \sim \rho_t(x_t)} \left[ \|  v^{\theta}_{t,t}(x_t) - \partial_t I_t(x_0, x_1) \|^2_g \right].
    \label{eq:rfm}
\end{equation}

\cut{
To begin, we note that we now require an interpolant supported on the manifold in question, $I:\M^2\times [0,1]\to\M$.

\nb{Maybe it's fancy notation for the same thing, but why not define the interpolant as a geodesic between $x_0$ and $x_1$? i.e., parameterize the curve $\gamma : [0, 1] \to \mathcal{M}$ satisfying $\gamma(0) = x_0$ and $\gamma(1) = x_1$.}
In this case, we cannot use any combination of $x_0$ and $x_1$ as in the Euclidean case, where $\beta + \alpha$ needed not to sum to one. Therefore, we must restrict ourselves in considering interpolants of the form $I:(x_0, x_1,t)\mapsto \exp_{x_0}(\alpha(t) \log_{x_0}(x_1))$, with $\alpha(0) = 0$ and $\alpha(1) = 1$, such as in~\citet{rfm}. 
We have thus by construction that $I_0(x_0, x_1) = \exp_{x_0}(\vec 0) = x_0\sim\rho_0$ and $I_1(x_0, x_1) = \exp_{x_0}(\log_{x_0}(x_1)) = x_1\sim \rho_1$, as long as $\rho(x_0, x_1)$ has as marginals $\rho_0$ and $\rho_1$, like in the Euclidean case; this therefore defines a probability path between $\rho_0$ and $\rho_1$, differentiable in $t$. This provides us with a valid probability flow as well, identical to \cref{eq:probflow} but restricted to the manifold. Concretely, in our case, we have that $x_t \in \M$ for any $t$ and $\partial_t I_t = \log_{x_t}(x_1) / (1-t)\in \gT_{x_t}\M$ as it is the geodesic interpolant between $x_0$ and $x_1$~\citep{rfm}. Because of that, we refer to~\cref{eq:probflow} interchangeably for both the Riemannian and Euclidean cases. Finally, note that, although we seemingly lose richness in the choice of the interpolations between our two end-points, in practice $\alpha(t) = t$ for all $t$ and $\beta \equiv 1-\alpha$ are always used.
\nb{Do we mean $\beta(t) = 1 - t$?}

Having defined the interpolant and the probability path, we show that the formulation of flow maps in the Euclidean case by~\citet{boffi2024flow,flowmaps} ports naturally to Riemannian manifolds. Let us first define it.
\nb{This definition refers to~\cref{eq:probflow}, but that equation was introduced in the context of Euclidean data on $\R^d$. Do we need to introduce a Riemannian probability flow, or do we think it's clear from the context?}

\begin{definition}
\label{def:gen_flow_map}
Let $(\M, g)$ be a Riemannian manifold, and let $\rho_0$ and $\rho_1$ be two distributions on $(\M, g)$. The generalised flow map is the unique function, $X:[0,1]^2\times\M\to\M$, such that, for any solution of \cref{eq:probflow}, and any $0 \leq s \leq t \leq 1$, $X_{s,t}(x_s) = x_t$.
\end{definition}

Completely analogously to the Euclidean case, the true generalised flow map enables one-step sampling by first sampling $x_0\sim\rho_0$ and then applying $X_{0,1}(x_0)\sim\rho_1$. Moreover, here in after, we parametrise the generalised flow map as $X_{s,t}(x) = \exp_x\left((t-s)v_{s,t}(x)\right)$, which satisfies the boundary condition as well, since $\exp_x(\vec 0) = x$. First of all, we find that the tangent condition from the Euclidean case also holds.
\nb{Very nice!}

\cut{
Completely analogously to the Euclidean case, the true generalised flow map enables one-step sampling by first sampling $x_0\sim\rho_0$ and then applying $X_{0,1}(x_0)\sim\rho_1$. Moreover, here in after, we parametrise the generalised flow map as $X_{s,t}(x) = \exp_x\left((t-s)v_{s,t}(x)\right)$, which satisfies the boundary condition as well, since $\exp_x(\vec 0) = x$. First of all, we find that the tangent condition from the Euclidean case also holds.
}

Let us now characterise the generalised flow map in three different ways.
\begin{proposition}
Let $X$ be the flow map between $\rho_0$ and $\rho_1$ on $(\M, g)$.
\end{proposition}
}

\subsection{Generalised Self-Distillation losses}
\label{sec:riemannian_self_distillation}
\looseness=-1
Given the above characterisations of the \nameshort in~\cref{prop:gfm_characterization}, we define three new generalised \emph{self-distillation} objectives, leading to Generalised Lagrangian self-distillation (G-LSD), Generalised Eulerian self-distillation (G-ESD), and Generalised progressive self-distillation (G-PSD). 
Specifically, we consider self-distillation objectives of the following form:
\begin{equation}
    \gL(\theta) = \gL_{\mathrm{RFM}}(\theta) + \gL_{\mathrm{GFM\text{-}SD}}\left(\theta\right).
    \label{eq:general_rfm_gfm}
\end{equation}

\looseness=-1
To adapt to the Riemannian setting, we further constrain our vector field $v^{\theta}$ to lie on the tangent plane at the point where it is evaluated. Specifically, letting $f^\theta:[0,1]^2 \times \gM \to \R^d$ be our underlying neural network, then, for any $0 \leq s \leq t \leq 1$, $ p\in \M$, $v^{\theta}_{s,t}(p) \coloneqq \proj_{\gT_p\M}\left(f^\theta_{s,t}(p)\right)$. This ensures that all the usual Riemannian manifold operations required are not ill-defined.

\looseness=-1
\xhdr{Generalised Lagrangian self-distillation}
As linear combination of vectors also belong to the same tangent space, we may freely consider the difference between $X_{s,t}^\theta(I_s)$ and $v_{t,t}^\theta(X_{s,t}^\theta(I_s))$. However, we further adapt the loss to incorporate the Riemannian metric in the norm of the resulting vector so that the losses are comparable with the flow matching ones (namely in terms of magnitude).

\begin{mdframed}[style=MyFrame2]
\begin{restatable}{proposition}{propglsd} 
\label{prop:glsd}
(Generalised Lagrangian self-distillation) The \nameshort is the unique minimiser of the objective in~\eqref{eq:general_rfm_gfm} for $\gL_{\mathrm{GFM\text{-}SD}}\left({\theta}\right) = \gL_{\mathrm{G\text{-}LSD}}(\theta)$, where
\begin{equation}
\label{eq:rlsd}
\gL_{\mathrm{G\text{-}LSD}}({\theta}) = \E_{t,s, (x_0,x_1)}\left[\left\lVert \partial_t X^{\theta}_{ s,t}(I_s) - v^{\theta}_{ t,t}(X^{\theta}_{s,t}(I_s)) \right\rVert_g^2\right].
\end{equation}
\end{restatable}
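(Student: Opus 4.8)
The plan is to use the decomposition of the objective into two non-negative summands, $\gL(\theta) = \gL_{\mathrm{RFM}}(\theta) + \gL_{\mathrm{G\text{-}LSD}}(\theta)$, and show that its infimum is $0$, attained precisely when $v^\theta$ is the vector field underlying the \nameshort. First I would check that the \nameshort is a minimiser: by \cref{lemma:generalisedtangentcondition} and the remark following it, its diagonal vector field satisfies $v_{t,t} = v_t$, the drift of \cref{eq:prob_flow_on_manifolds}, so $\gL_{\mathrm{RFM}}(\theta) = 0$ by \cref{eq:rfm}; and by the Generalised Lagrangian Condition in \cref{prop:gfm_characterization} we have $\partial_t X_{s,t}(x_s) = v_t(X_{s,t}(x_s)) = v_{t,t}(X_{s,t}(x_s))$ for all $(s,t)$ and all $x_s$ --- in particular at $x_s = I_s$ --- so $\gL_{\mathrm{G\text{-}LSD}}(\theta) = 0$ as well. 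Hence $0$ is the minimum value and the \nameshort attains it.

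For the converse --- that any minimiser coincides with the \nameshort --- I would argue in two stages, since $\gL(\theta) = 0$ forces both summands to vanish. Stage one: $\gL_{\mathrm{RFM}}(\theta) = 0$. Expanding the squared $g$-norm and using $x_t = I_t(x_0,x_1) \sim \rho_t$, the conditional-expectation minimiser of \cref{eq:rfm} is $v^\theta_{t,t}(x) = \E[\partial_t I_t \mid I_t = x] = v_t(x)$ for $\rho_t$-a.e.\ $x$ and a.e.\ $t$; by continuity of the parametrisation in $t$ this holds for all $t$, pinning the diagonal of $v^\theta$ to the true drift. Stage two: substituting $v^\theta_{t,t} = v_t$ into \cref{eq:rlsd}, vanishing of $\gL_{\mathrm{G\text{-}LSD}}(\theta)$ forces $\partial_t X^\theta_{s,t}(I_s) = v_t(X^\theta_{s,t}(I_s))$ for a.e.\ $(s,t)$ and $\rho_s$-a.e.\ $I_s$ (both vectors lie in $\gT_{X^\theta_{s,t}(I_s)}\M$, so the difference and its $g$-norm are well defined). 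Fixing such an $s$ and an initial point $x_s \in \operatorname{supp}\rho_s$, the curve $t\mapsto X^\theta_{s,t}(x_s)$ starts at $x_s$ (by the built-in boundary condition $X^\theta_{s,s} = \mathrm{Id}$, since $\exp_{x_s}(\vec 0) = x_s$) and solves the probability flow ODE $\partial_t x = v_t(x)$; by existence and uniqueness of integral curves on the complete manifold $(\M, g)$ it must equal the flow line through $x_s$, which is exactly the jump condition of \cref{def:gen_flow_map}. So $X^\theta$ agrees with the \nameshort on $\bigcup_t \operatorname{supp}\rho_t$, the region where the jump condition is meaningful, and uniqueness of the \nameshort (\cref{prop:gfm_characterization}) completes the argument.

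The hard part will be managing the interplay of the two loss terms together with the support restrictions: the \textsc{G-LSD} term only constrains $X^\theta$ at arguments of the form $X^\theta_{s,t}(I_s)$ with $I_s\sim\rho_s$, so the Lagrangian condition cannot be read off pointwise on all of $\M$, and one must first exploit $\gL_{\mathrm{RFM}}$ to fix the diagonal before the ODE-uniqueness step applies. A secondary technical point is justifying that $\partial_t X^\theta_{s,t}$ is well defined --- the parametrisation $X^\theta_{s,t}(x) = \exp_x((t-s)v^\theta_{s,t}(x))$ is $C^1$ in $t$ given smoothness of $\exp$ and of $f^\theta$ --- and upgrading the almost-everywhere identities to everywhere by continuity so the conclusion matches the pointwise form of \cref{prop:gfm_characterization}. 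Finally I would note that restricting $v^\theta$ to the tangent bundle via $\proj_{\gT_p\M}$ is precisely what keeps $\exp$, the differential, and the $g$-norm well defined throughout the argument.
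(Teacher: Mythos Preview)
Your overall strategy matches the paper's (and its more detailed Euclidean analogue in \cref{prop:sd_euclid}): show the \nameshort minimises both summands, then argue that any minimiser must satisfy both the diagonal identity $v^\theta_{t,t}=v_t$ and the Lagrangian condition, concluding via \cref{prop:gfm_characterization}. However, there is one genuine slip: you assert that the infimum of $\gL$ is $0$ and that $\gL_{\mathrm{RFM}}(\theta)=0$ at the \nameshort. This is false. The RFM loss \eqref{eq:rfm} regresses $v^\theta_{t,t}(I_t)$ onto the \emph{conditional} velocity $\partial_t I_t(x_0,x_1)$, so its minimum over $v^\theta$ is the residual conditional variance $\E\bigl[\lVert\partial_t I_t - v_t(I_t)\rVert_g^2\bigr]$, which is generically strictly positive. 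Consequently the sentence ``$\gL(\theta)=0$ forces both summands to vanish'' is wrong as stated.

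The repair is minor: replace $0$ throughout by $c\coloneqq\min_\theta\gL_{\mathrm{RFM}}(\theta)$. Since the \nameshort attains this minimum on the diagonal (because $v_{t,t}=v_t$) and simultaneously drives $\gL_{\mathrm{G\text{-}LSD}}$ to zero, the total loss is minimised at value $c$; any other minimiser must then achieve equality in both $\gL_{\mathrm{RFM}}(\theta)\ge c$ and $\gL_{\mathrm{G\text{-}LSD}}(\theta)\ge 0$ separately, after which your two-stage converse argument goes through unchanged. The paper's own proof is a one-line appeal to \cref{prop:gfm_characterization} once the Lagrangian residual vanishes, so modulo this correction your version is actually more careful about support and a.e.\ issues than the paper---just not about the value of the infimum.
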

\end{mdframed}
The proof for~\cref{prop:glsd} is provided in~\S\ref{app:proofs_generalised_losses}.

\looseness=-1
\xhdr{Eulerian self-distillation}
In an analogous manner to the Generalised Lagrangian self-distillation loss, we can instantiate an Eulerian loss to the Riemannian setting.

\begin{mdframed}[style=MyFrame2]
\begin{restatable}{proposition}{propgesd} 
\label{prop:gesd}
(Generalised Eulerian self-distillation) The \nameshort is the unique minimiser of the objective in~\eqref{eq:general_rfm_gfm} for $\gL_{\mathrm{GFM\text{-}SD}}\left({\theta}\right) = \gL_{\mathrm{G\text{-}ESD}}\left({\theta}\right)$, where
\begin{equation}
\label{eq:gesd}
\gL_{\mathrm{G\text{-}ESD}}({\theta}) = \E_{t,s,(x_0,x_1)}\left[\left\lVert \partial_s X_{s,t}^\theta(x_s) + \ddd( X_{s,t}^\theta)_{I_s}[v^\theta_{s,s}(I_s)]\right\rVert^2_g\right].
\end{equation}
\end{restatable}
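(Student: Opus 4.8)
The plan is to exploit that the objective \eqref{eq:general_rfm_gfm} is a sum of two non-negative terms, so that its minimisers are exactly the parameters at which \emph{both} terms vanish; I would then identify the zero set of each term with one of the conditions in \cref{prop:gfm_characterization}. First I would establish that a zero is attained: the true \nameshort $X$ has correct diagonal $v_{s,s} = v_s$ (by \cref{lemma:generalisedtangentcondition} and the ensuing discussion that $v_{t,t} = v_t$), hence makes $\mathcal{L}_{\mathrm{RFM}}$ vanish, and it satisfies the Generalised Eulerian Condition \eqref{eq:generalised_eulerian_condition}, which is exactly the integrand of $\mathcal{L}_{\mathrm{G\text{-}ESD}}$ once $v^\theta_{s,s}$ is replaced by $v_s$; so the minimum value $0$ is attained, which gives one inclusion.

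Next I would characterise the zero set of $\mathcal{L}_{\mathrm{RFM}}$. By the standard conditional-expectation (bias--variance) argument for flow matching, run with the Riemannian norm $\|\cdot\|_g$, one has $\mathcal{L}_{\mathrm{RFM}}(\theta) = 0$ iff $v^\theta_{s,s}(x) = \mathbb{E}[\partial_t I_t \mid I_t = x] = v_s(x)$ for every $s$ and $\rho_s$-almost every $x$; the tangency constraint $v^\theta_{s,s} = \proj_{\T_x\M}(f^\theta_{s,s})$ is harmless since $v_s(x) \in \T_x\M$ already, so the projection loses nothing. Here I would invoke the usual coverage hypothesis that the interpolant marginals $\rho_s$ charge all of $\M$, so ``almost everywhere'' upgrades to ``everywhere'' by continuity.

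Then, on the common zero set of the two terms, I would substitute $v^\theta_{s,s} = v_s$ into $\mathcal{L}_{\mathrm{G\text{-}ESD}}(\theta) = 0$ to obtain
\[
\partial_s X^\theta_{s,t}(x_s) + \ddd(X^\theta_{s,t})_{x_s}[v_s(x_s)] = 0
\]
for all $(s,t)\in[0,1]^2$ and $\rho_s$-a.e.\ $x_s = I_s(x_0,x_1)$, i.e.\ precisely the Generalised Eulerian Condition. Since $X^\theta$ already has the parametrised form $\exp_x((t-s)v^\theta_{s,t}(x))$ assumed in \cref{prop:gfm_characterization} (in particular the boundary condition $X^\theta_{s,s}=\mathrm{Id}$ holds automatically and need not be enforced by a loss term), that proposition yields that $X^\theta$ is \emph{the} \nameshort. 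Combined with the first paragraph, the minimiser of \eqref{eq:general_rfm_gfm} exists, is unique, and equals the \nameshort.

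The main obstacle I anticipate is the interaction between the two terms rather than either one alone: the residual in \eqref{eq:gesd} is written with the model's \emph{own} diagonal velocity $v^\theta_{s,s}$, so $\mathcal{L}_{\mathrm{G\text{-}ESD}}$ by itself is self-referential --- it is minimised by any $X^\theta$ that is Eulerian-consistent with whatever diagonal it happens to carry, not necessarily with $v_s$. The proof genuinely needs $\mathcal{L}_{\mathrm{RFM}}$ to pin the diagonal to the true drift before the Eulerian residual coincides with the condition in \cref{prop:gfm_characterization}, and legitimising the substitution requires the identity $v^\theta_{s,s} = v_s$ to hold on (at least) the set of $x_s$ on which the $\mathcal{L}_{\mathrm{G\text{-}ESD}}$ integrand is evaluated --- which is why I would phrase both facts relative to the marginals $\rho_s$ and state the support hypothesis explicitly. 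A secondary, routine point is upgrading a.e.\ vanishing (all a squared expectation controls) to the everywhere-pointwise statements demanded by \cref{prop:gfm_characterization}, handled via full support of $\rho_s$ on $\M$ together with continuity in $(s,t,x)$ of $X^\theta$, its $s$-derivative, its differential, and $v_s$.
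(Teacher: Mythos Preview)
Your overall strategy matches the paper's (very terse) proof: reduce to the vanishing of the $\mathcal{L}_{\mathrm{G\text{-}ESD}}$ term, identify that with the Generalised Eulerian Condition, and invoke \cref{prop:gfm_characterization}. You are also right to flag that the $\mathcal{L}_{\mathrm{RFM}}$ term is what pins $v^\theta_{s,s}$ to the true drift $v_s$ before the Eulerian residual can be compared with \eqref{eq:generalised_eulerian_condition}; the paper leaves this step entirely implicit.

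There is, however, one concrete error: you repeatedly assert that $\mathcal{L}_{\mathrm{RFM}}$ \emph{vanishes} at the true flow and then speak of its ``zero set''. This is false. The loss in \eqref{eq:rfm} regresses $v^\theta_{t,t}(x_t)$ onto the \emph{conditional} target $\partial_t I_t(x_0,x_1)$, not onto $v_t(x_t)=\mathbb{E}[\partial_t I_t\mid I_t=x_t]$; by the very bias--variance decomposition you invoke, its minimum equals the expected conditional variance $\mathbb{E}\bigl[\|\partial_t I_t - v_t(I_t)\|_g^2\bigr]$, which is generically strictly positive. The repair is routine: argue that the combined objective is bounded below by $\min_\theta \mathcal{L}_{\mathrm{RFM}}(\theta) + 0$, that the true \nameshort attains this bound (it minimises $\mathcal{L}_{\mathrm{RFM}}$ and zeros $\mathcal{L}_{\mathrm{G\text{-}ESD}}$), and hence any minimiser must separately minimise $\mathcal{L}_{\mathrm{RFM}}$ --- forcing $v^\theta_{s,s}=v_s$ --- and zero $\mathcal{L}_{\mathrm{G\text{-}ESD}}$. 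Everything downstream of this correction, including the substitution $v^\theta_{s,s}\to v_s$ in the Eulerian residual and the appeal to \cref{prop:gfm_characterization}, goes through as you wrote it.
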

\end{mdframed}

\looseness=-1
%
For completeness, the proof for~\cref{prop:gesd} is included in~\S\ref{app:proofs_generalised_losses}.

\looseness=-1
\xhdr{Progressive self-distillation}
Naturally, $X_{u,t}\circ X_{s,u}:\M\to\M$ is well-defined for all $(s, u ,t) \in [0,1]^3$. 
Thus, we may use the geodesic distance, $d_g$, to derive the Generalised Progressive Self-Distillation (G-PSD) objective to enforce the semigroup condition~\eqref{eq:generalised_semi_group_condition}.
The G-PSD objective is the simplest to port, as it is devoid of any spatial or time derivatives of the \nameshort as stated in the following proposition. The proof is deferred to~\S\ref{app:proofs_generalised_losses}.
\begin{mdframed}[style=MyFrame2]
\begin{restatable}{proposition}{propgpsd} 
\label{prop:gpsd}
(Generalised Progressive Self-Distillation) The \nameshort is the unique minimiser over $v_{\theta}$ of~\eqref{eq:general_rfm_gfm} for $\gL_{\mathrm{GFM\text{-}SD}}\left({\theta}\right) = \gL_{\mathrm{G\text{-}PSD}}\left({\theta}\right)$ and $u \mid (s,t) \sim \gU(s, t)$, where
\begin{equation}
\label{eq:gpsd}
\gL_{\mathrm{G\text{-}PSD}}({\theta}) = \E_{t,s,u,(x_0,x_1)}\left[d_g^2\left(X^{\theta}_{s,t}(I_s), X^{\theta}_{u,t}\left(X^{\theta}_{s,u}(I_s)\right)\right)\right].
\end{equation}
\end{restatable}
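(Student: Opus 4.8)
The plan is the standard ``sum of non-negative terms'' argument: show that $\gL(\theta)=\gL_{\mathrm{RFM}}(\theta)+\gL_{\mathrm{G\text{-}PSD}}(\theta)\ge 0$ with equality attained by the true \nameshort, and then show that any zero of $\gL$ must coincide with the \nameshort. The key point is that the two summands play complementary roles: $\gL_{\mathrm{RFM}}$ pins the diagonal $v^\theta_{t,t}$ to the true drift $v_t$, and $\gL_{\mathrm{G\text{-}PSD}}$ enforces the semigroup identity \eqref{eq:generalised_semi_group_condition}; together these invoke \cref{prop:gfm_characterization}.

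For the forward direction, let $X$ be the true \nameshort, parametrised as $X_{s,t}(x)=\exp_x((t-s)v_{s,t}(x))$ with $v_{t,t}=v_t$ (the identification discussed after \cref{lemma:generalisedtangentcondition}, rigorously \cref{lemma:der_flowmap}). The RFM term vanishes because, for each $t$ and $x_t$, the conditional expectation $\E[\partial_t I_t\mid I_t=x_t]=v_t(x_t)\in\gT_{x_t}\M$ is the unique pointwise minimiser of the quadratic functional in \eqref{eq:rfm}. The G-PSD term vanishes because, by \cref{def:gen_flow_map} and uniqueness of solutions of \eqref{eq:prob_flow_on_manifolds}, $X_{u,t}(X_{s,u}(x_s))=X_{u,t}(x_u)=x_t=X_{s,t}(x_s)$, so the geodesic distance inside the expectation is identically zero. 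Hence $\min_\theta\gL(\theta)=0$ and it is attained at $X$.

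For uniqueness, suppose $\gL(\theta)=0$; then both summands vanish. Vanishing of $\gL_{\mathrm{RFM}}$ forces $v^\theta_{t,t}(x)=v_t(x)$ for a.e.\ $t$ and $\rho_t$-a.e.\ $x$, again by the conditional-expectation characterisation; combined with the parametrisation this gives $\lim_{s\to t}\partial_t X^\theta_{s,t}(x)=v^\theta_{t,t}(x)=v_t(x)$ (cf.\ \cref{lemma:der_flowmap}). Vanishing of $\gL_{\mathrm{G\text{-}PSD}}$, together with the fact that $d_g$ is a genuine metric, so $d_g^2(p,q)=0$ iff $p=q$, gives $X^\theta_{u,t}(X^\theta_{s,u}(I_s))=X^\theta_{s,t}(I_s)$ for a.e.\ $(s,u,t)$ with $u$ between $s$ and $t$ and $\rho$-a.e.\ $(x_0,x_1)$. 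Since $X^\theta$ is jointly continuous in all arguments (smoothness of $f^\theta$, $\proj$, and $\exp$) and the sampling law has full support in the time variables, this extends to the semigroup identity \eqref{eq:generalised_semi_group_condition} for all ordered triples and all points on the support of the probability path. Fixing $s,x_s$ and writing $\gamma(t)=X^\theta_{s,t}(x_s)$, the semigroup identity gives $\gamma(t)=X^\theta_{u,t}(\gamma(u))$ for $u\in[s,t]$; differentiating in $t$ and letting $u\to t$ yields $\gamma'(t)=v^\theta_{t,t}(\gamma(t))=v_t(\gamma(t))$, which is exactly the Generalised Lagrangian Condition \eqref{eq:generalised_lagrangian_condition} with the diagonal already pinned. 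With the boundary condition $\gamma(s)=X^\theta_{s,s}(x_s)=x_s$ and uniqueness of solutions of \eqref{eq:prob_flow_on_manifolds}, we get $\gamma(t)=x_t$, so $X^\theta$ is the \nameshort; equivalently one invokes \cref{prop:gfm_characterization} directly once the semigroup condition and the diagonal constraint are in hand.

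The main obstacle is conceptual rather than computational: the semigroup condition \emph{by itself} does not pin down the flow (the identity family $X_{s,t}=\mathrm{Id}$ satisfies it), so the proof genuinely needs the $\gL_{\mathrm{RFM}}$ term to fix $v^\theta_{t,t}=v_t$ before the semigroup identity upgrades to the Lagrangian characterisation — this is why the combined objective, and not $\gL_{\mathrm{G\text{-}PSD}}$ alone, is what is stated. The remaining steps (identifying the flow-matching minimiser with the conditional expectation on each tangent space, and promoting the almost-everywhere identities to everywhere via continuity and full support of the sampling distribution) are routine.
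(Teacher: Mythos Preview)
Your strategy is the paper's: show the self-distillation term vanishes exactly when the semigroup identity \eqref{eq:generalised_semi_group_condition} holds, then invoke \cref{prop:gfm_characterization}. You actually go further than the paper's two-line proof by spelling out how the semigroup identity \emph{together with} the diagonal constraint $v^\theta_{t,t}=v_t$ manufactures the Lagrangian condition (differentiating $\gamma(t)=X^\theta_{u,t}(\gamma(u))$ and letting $u\to t$), and by noting that the semigroup identity alone is vacuous (the identity family satisfies it). This extra care is warranted, since the paper's proof of \cref{prop:gfm_characterization} only verifies the forward direction of the semigroup characterisation.

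One genuine slip: your claim that ``the RFM term vanishes'' at the true flow is false. The loss \eqref{eq:rfm} regresses $v^\theta_{t,t}(x_t)$ onto the random target $\partial_t I_t$, so its minimum equals the irreducible conditional variance $\E\bigl[\lVert\partial_t I_t-\E[\partial_t I_t\mid I_t]\rVert_g^2\bigr]$, which is strictly positive whenever the interpolant is non-deterministic. The fix is the one used in the Euclidean argument (\cref{prop:sd_euclid}): replace ``$\gL(\theta)\ge 0$ with equality at $X$'' by ``$\gL(\theta)\ge \gL_{\mathrm{RFM}}(v)$ with equality at $X$''. Then any minimiser forces simultaneously $\gL_{\mathrm{RFM}}(\theta)=\gL_{\mathrm{RFM}}(v)$, hence $v^\theta_{t,t}=v_t$ a.e., and $\gL_{\mathrm{G\text{-}PSD}}(\theta)=0$, hence the semigroup identity; the remainder of your chain is unchanged. (The paper's own one-line proof also says ``the loss is zero'', inheriting the same imprecision.)
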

\end{mdframed}

\subsection{Training Generalised Flow Maps}
\label{sec:training_gfm_with_stopgrad}

\looseness=-1
While it is possible to implement the \nameshort objectives naively, it may incur unstable training dynamics. Instead, and in line with the literature~\citep{ayf,flowmaps,meanflows}, we opt for a \emph{self-bootstrapped} objective by placing a stop-gradient operator to only optimise parts of each objective. This, in turn, converts one term in the objective as the ``teacher'', which is distilled to terms in the loss that have non-zero parameter gradients. 
We include in~\cref{al:gfm_training} the pseudocode used to train a \nameshort for any self-distillation loss.



\cut{
 However, each of these losses, if trained on rawly, involves intricate combinations of derivatives and Jacobians which can natively be handled by modern ML libraries, but may lead to unstable optimisation objectives---as noted out in~\citet{flowmaps}, for example. Therefore, it is common practice~\citep{ayf,flowmaps,meanflows} to place a stop-gradient operator to only optimise on some of the members of the objectives, instead of all at once. Different \emph{valid} combinations lead to different optimisation objectives, and hence incur different training dynamics. Note, here, that putting a stop-gradient operator on one of the terms makes it a ``teacher'', as it is therefore the reference for the part of the loss that has non-zero gradients with respect to the parameters.
We include in~\cref{al:gfm_training} the pseudocode used to train a \nameshort for any choice of self-distillation loss.
}

\looseness=-1
\xhdr{G-LSD loss} We place the stop-gradient on the second term of~\eqref{eq:generalised_lagrangian_condition}, yielding the objective:
\begin{equation}
\hat \gL_{\mathrm{G\text{-}LSD}}({\theta}) = \E_{t,s, (x_0,x_1)}\left[\left\lVert \partial_t X^{\theta}_{ s,t}(I_s) - \sg\left(v^{\theta}_{ t,t}(X^{\theta}_{s,t}(I_s)) \right)\right\rVert_g^2\right].
\end{equation}
Back-propagating through $\partial_t X_{s,t}^\theta$ is simple in usual modern ML libraries, as forward-mode automatic differentiation is typically available alongside the Jacobian-Vector Product (JVP) operation.

\looseness=-1
\xhdr{G-ESD loss} Similarly, for the Eulerian loss, we apply the stop-gradient to the second term, which contains the spatial derivative---and thus avoiding higher order derivatives---leading to:
\begin{equation}
    \hat \gL_{\mathrm{G\text{-}ESD}}(\theta) = \E_{t,s, (x_0,x_1)}\left[\left\lVert \partial_s X_{s,t}(I_s) + \sg\left(\ddd(X_{s,t}^\theta)_{I_s}[v_{s,s}^\theta(I_s)]\right)\right\rVert_g^2\right].
\end{equation}
Note that this results in an objective closely related to that of (a Riemannian generalisation of) Mean Flows~\citep{meanflows}---a connection we prove in~\S\ref{app:connection_existing_methods}. We also implement the latter, and term this objective G-MF (Generalised Mean Flows), and define it fully in~\S\ref{app:connection_existing_methods}. This objective is trained as in Mean Flows; that is to say, without the flow matching loss.

\looseness=-1
\xhdr{G-PSD loss} Finally, we also utilise the stop-gradient operator on the PSD objective using the two, smaller steps ($s$ to $u$, $u$ to $t$) as the teacher for the larger step ($s$ to $t$), resulting in the following:
\begin{equation}
\hat \gL_{\mathrm{G\text{-}PSD}}({\theta}) = \E_{t,s,u,(x_0,x_1)}\left[d_g^2\left(X^{\theta}_{s,t}(I_s), \sg\left(X^{\theta}_{u,t}\left(X^{\theta}_{s,u}(I_s)\right)\right)\right)\right].
\end{equation}
\looseness=-1
We follow~\citet{flowmaps} by setting $u = \frac 12 s +\frac 12 t$, leading to two half-steps, thus generalising shortcut models of~\citet{shortcut} to the manifolds---a connection we also detail in~\S\ref{app:connection_existing_methods}.

\begin{algorithm}[tb]
\caption{\nameshort training, for any choice of self-distillation.}
\KwIn{Riemannian manifold, $(\M, g)$; time distributions, $T$, $S\mid T$; coupling, $\rho$; batch size, $M$.}

\Repeat{converged}{
  Draw batch $(t_i, s_i,x_0^i,x_1^i)_{i=1}^M \sim (T, S\mid T, \rho(x_0,x_1)$)\;
  Construct $x_s^i = I(x_0^i, x_1^i, s^i)$, compute $u_s^i = \partial_{s^i} I(x_0^i, x_1^i, s^i)$\;
  Estimate $\hat \gL_{\mathrm{RFM}}(\theta,(x_s^i)_{i=1}^M, (u_s^i)_{i=1}^M) \approx \gL_\mathrm{RFM}(\theta)$\;
  Estimate $\hat \gL_{\mathrm{GFM\text{-}SD}}(\theta, x_s^i, s^i, t^i) \approx \gL_{\mathrm{GFM\text{-}SD}}(\theta)$\;
  Optimisation step on $\hat \gL_{\mathrm{RFM}}(\theta,(x_s^i)_{i=1}^M, (u_s^i)_{i=1}^M) + \hat \gL_{\mathrm{GFM\text{-}SD}}(\theta, (x_s^i)_{i=1}^M, (s^i)_{i=1}^M, (t^i)_{i=1}^M)$\;
}
\KwOut{Flow map $X^\theta$.}
\label{al:gfm_training}
\end{algorithm}

\cut{
\begin{proposition}
\label{prop:der_tangent}
For any $x \in \M$, $0 \leq s \leq t \leq 1$, $\partial_t \hat X_{s,t}(x) \in \T_{\hat X_{s,t}(x)} \M$.
\end{proposition}

\begin{proposition}
\label{prop:der_vec_tangent}
For any $x \in M$, $0 \leq s \leq t \leq 1$, $(\partial_t\hat v_{s,t})(x) \in \T_{x}\M$.
\end{proposition}
}

\cut{
\subsection{Instantiation on Lie Groups}

\joey{TODO: Redo this section to match previous notation}

On Lie groups, the flow map directly outputs an element of the group $g \in G$. This enables us the choice of building the map $X_{s,t}(x) = x_t$ as a group element through its representation in $\mathrm{GL}(n)$. Another option, instead, is to parametrize the flow map as outputting directly to the Lie algebra through the right action:
\begin{equation}
    \Phi_{s,t}(g) = g \exp\left( \psi_{\theta}(s, t, g)\right),  
\end{equation}
where $\psi_{\theta}(s, t, g) \mapsto \mathfrak{g}$ corresponds to a Lie algebra element. Naturally, the exponential map leads us to back to the group, and thus this parameterization has the advantage of ``always" being on the manifold. Since the flow map above acts by a group action, equivariance properties must be established. Specifically, the flow map must obey left equivariance:
\begin{equation}
    \Phi_{s,t}(hg) = h\Phi_{s,t}(g).
\end{equation}
This means that at the Lie algebra level $\psi_{\theta}$ transforms under the adjoint action of $G$,
\begin{equation}
    \psi_{\theta}(s, t, hg) = \text{Adj}_h\psi_{\theta}(s, t, g) = h^{-1} \psi_{\theta}(s, t, g) h.
\end{equation}

Instead of forcing $\psi_{\theta}$ to be an exact equivariant map, we can instead penalize the deviation from exact equivariance and minimize an adjoint action loss as a regularizer. 

\begin{equation}
    \gL_{\text{adj}} = \mathbb{E}_{g,h} \left[\|  \psi_{\theta}(s, t, hg) - h \psi_{\theta}(s, t, g)\|^2\right].
\end{equation}
}

\section{Experiments}
\label{sec:experiments}

\looseness=-1
We test the empirical calibre of \nameshort on a suite of standard geometric generative modelling benchmarks. \ifarxiv\footnote{We make our code publicly available at the following URL: \url{https://github.com/olsdavis/gfm}.}\else\fi Specifically, we instantiate \nameshort on torsion angles ($\mathbb{T}^2 \cong \mathbb{S}^1 \times \mathbb{S}^1$) found in protein side chains and RNA backbones $\mathbb{T}^7 \cong (\mathbb{S}^1)^7$~\citep{lovell2003,murray2003}, catastrophic geospatial events on Earth ($\mathbb{S}^2$) as introduced in~\citet{mathieu2020riemannian}, a synthetic dataset of the manifold of $3\text{D}$ rotations ($\sothree$), and on the Poincar\'e disk (hyperbolic geometry).

\looseness=-1
\xhdr{Metrics}
To evaluate the few-step sample quality of our models, we use the empirical MMD (Maximum Mean Discrepancy) between the test-set and the samples, with an RBF kernel using the manifold's distance, $d_g$, and a bandwidth of $\kappa = 1$. See~\S\ref{app:additional_experimental_details} for the calculation details. To assess the learnt vector field, we also compute the negative log-likelihood (NLL) on the test set when available. {We have included in~\cref{app:nll_relevance} a discussion explaining why NLL is relevant in this context.} Finally, we provide qualitative samples to further assert the method's soundness.

\looseness=-1
\xhdr{Baselines} We use RFM~\citep{rfm}, which is the state-of-the-art method, as the main baseline. Additionally, we include results for a Riemannian diffusion model (RDM) ~\citep{huang2022riemannian} and Riemannian score-based generative models (RSGM)~\citep{de2022riemannian} with test NLL results taken directly from the respective papers due to a lack of open source code. We also report a mixture of power spherical distributions (MoPS)~\citep{de2020power} for test NLL.

{Finally, we also detailed the computational cost of having using our proposed methods against RFM in~\cref{app:compute}.}

\subsection{Proteins Torsion Angles and RNA backbones on flat tori}
\label{sec:experiments_proteins_main_big_table}

\begin{table}[!t]
    \centering
\caption{Test NLL on protein sidechain and RNA torsion angles. Standard deviation estimated over $5$ runs. $^\dagger$ indicates baseline numbers taken from~\citet{huang2022riemannian}.
}
\vspace{-5pt}
\label{tab:nll_protein}
\resizebox{1\linewidth}{!}{
\begin{tabular}{lccccc}
\toprule
 & \textbf{General (2D)} & \textbf{Glycine (2D)} & \textbf{Proline (2D)} & \textbf{Pre-Pro (2D)} & \textbf{RNA (7D)} \\
\midrule
\textbf{Dataset size} & $138{,}208$ & $13{,}283$ & $7{,}634$ & $6{,}910$ & $9{,}478$ \\
\midrule
 MoPS$^\dagger$ & $1.15 {\pm 0.002}$ & $2.08 {\pm 0.009}$ & $0.27 {\pm 0.008}$ & $1.34 {\pm 0.019}$ &  $4.08 \pm 0.368$ \\
RDM$^\dagger$~\citep{huang2022riemannian} & $1.04 {\pm 0.012}$ & $ 1.97 {\pm 0.012}$ & $ 0.12 {\pm 0.011}$ & $ 1.24 \pm0.004$  & $ -3.70 {\pm0.592}$\\
RFM~\citep{rfm}& ${1.01}\pm0.025$ & $\mathbf{1.90 \pm 0.055}$ & $0.15 \pm 0.027$ & $ 1.18  \pm 0.055$ & $\mathbf{-5.20 \pm 0.067}$ \\
\cmidrule(lr){1-6}
\ourslsd (ours) & $0.99\pm 0.05$ & $1.99\pm0.02$ & $0.24\pm0.07$ & $1.11\pm0.02$ & $-4.15 \pm 0.09$ \\
\ourspsd (ours) & $\mathbf{0.95\pm 0.02}$ & ${1.94\pm0.03}$ & $\mathbf{0.08\pm 0.04}$ & $1.10\pm0.04$ & $-4.40 \pm 0.13$ \\
\oursesd (ours) & $0.99\pm0.04$ & $1.95\pm0.01$ & $0.19\pm 0.04$ & $1.10\pm0.02$ & $-4.61 \pm 0.07$ \\
\oursmf (ours) & $0.97\pm 0.01$ & $1.97\pm0.01$ & $0.21\pm0.04$ & $\mathbf{1.02\pm0.04}$ & $-3.79 \pm 0.09$ \\
\bottomrule
\end{tabular}
}
\end{table}

\looseness=-1
We train our model on a protein dataset on the flat 2D and 7D (RNA) tori ($\mathbb{S}^1\times\mathbb{S}^1$ and $(\mathbb{S}^1)^7$). The 2D data is from~\citet{lovell2003} and the 7D data from~\citet{murray2003}, compiled together by~\citet{riemannianDiffusion}. We report our MMD results in~\cref{tab:nll_protein,tab:protein_mmd_1nfe} and qualitative results in ~\cref{fig:nfe_mmd_proteins,fig:comparison-rama}. 
%
We observe that the test NLLs produced by \nameshort outperform those of RFM for the protein side chain datasets, and are marginally worse on the RNA dataset. Most importantly, we test the \emph{one-step} generative capability that is unique to \nameshort in~\cref{tab:protein_mmd_1nfe,fig:nfe_mmd_proteins} and find that \nameshort offers considerable gains, especially \ourslsd, which offers an improvement of up to $22\times$ on the MMD (on ``General'') for a single function evaluation. We also plot Ramachandran plots on the torsion angles in~\cref{fig:comparison-rama}, where we observe that our methods achieve log-likelihood landscapes comparable to those of RFM. Finally, as an ablation, we also plot in~\cref{fig:nfe_mmd_proteins} the MMD as a function of inference steps and find that \nameshort consistently improve on the MMD for low NFEs, with quality consistently improving with higher steps as RFM. 

\begin{figure}[!htb]
    \caption{Ramachandran plots on the General protein dataset. Test-set samples depicted in red.
    }
    \vspace{-10pt}
    \centering
    \begin{subfigure}{0.19\linewidth}
        \centering
        \includegraphics[width=\linewidth]{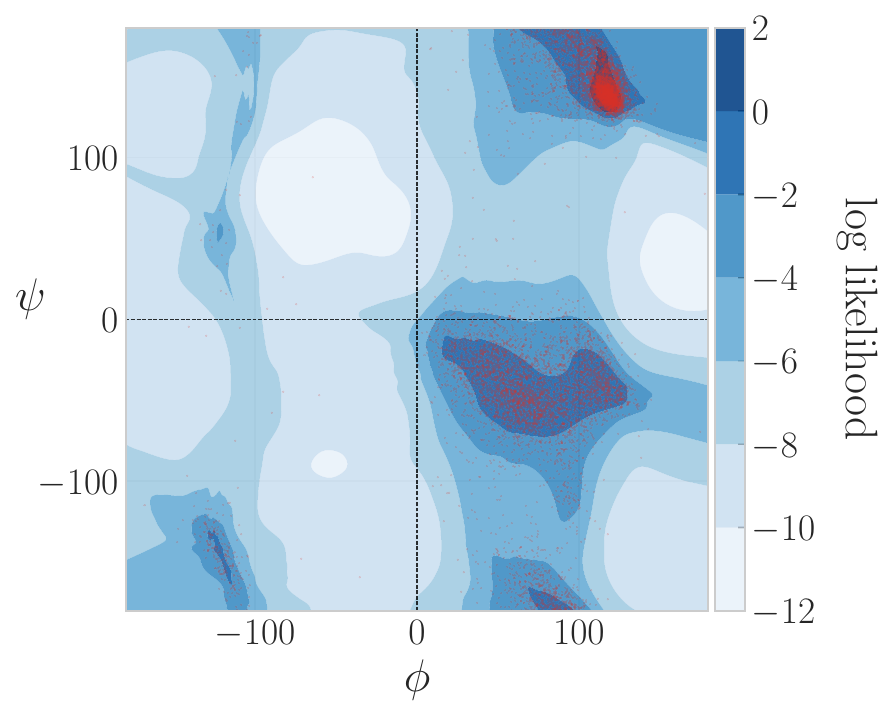}
        \caption{G-ESD}
        \label{fig:rama-esd}
    \end{subfigure}
    \begin{subfigure}{0.19\linewidth}
        \centering
        \includegraphics[width=\linewidth]{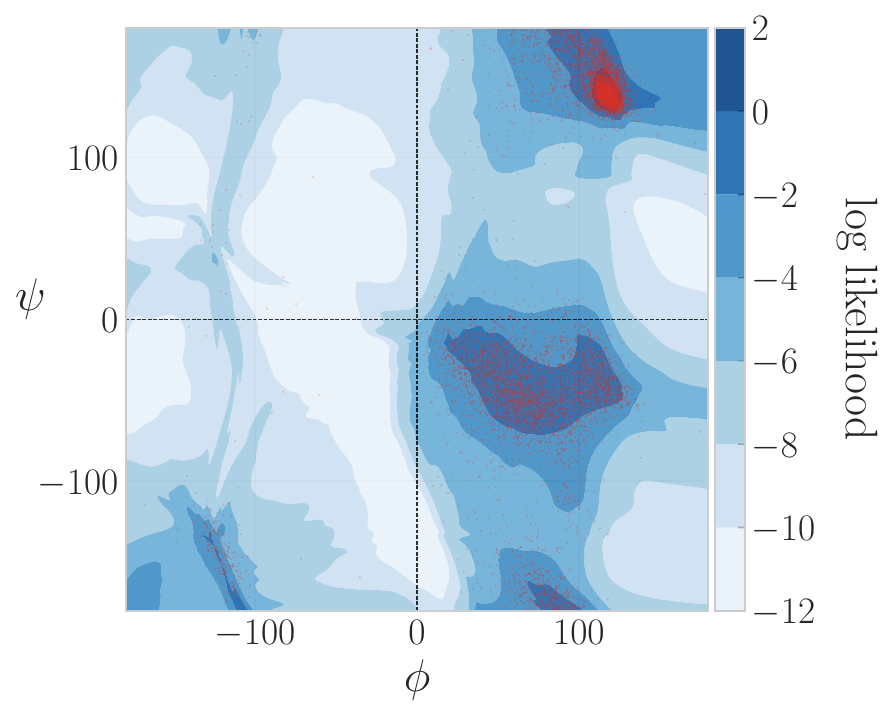}
        \caption{G-LSD}
        \label{fig:rama-lsd}
    \end{subfigure}
    \begin{subfigure}{0.19\linewidth}
        \centering
        \includegraphics[width=\linewidth]{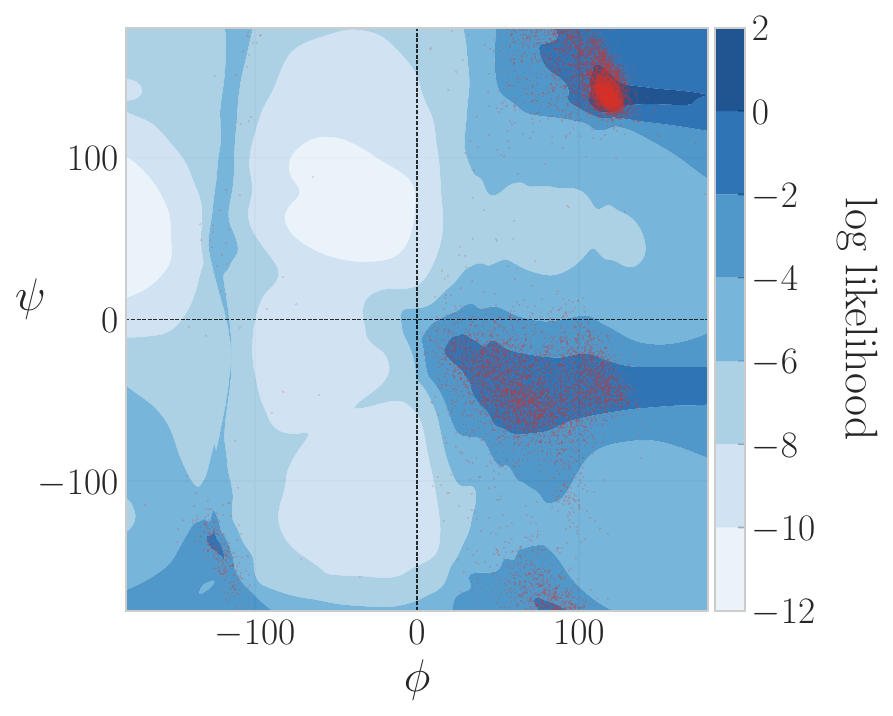}
        \caption{G-PSD}
        \label{fig:rama-psd}
    \end{subfigure}
    \begin{subfigure}{0.19\linewidth}
        \centering
        \includegraphics[width=\linewidth]{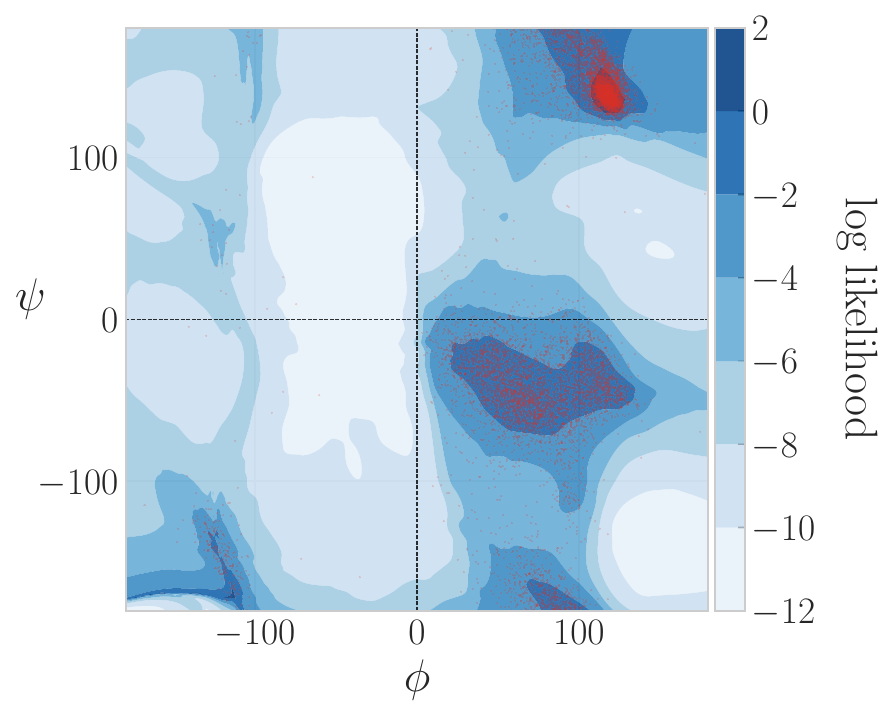}
        \caption{G-MF}
        \label{fig:rama-mf}
    \end{subfigure}
    \begin{subfigure}{0.19\linewidth}
        \centering
        \includegraphics[width=\linewidth]{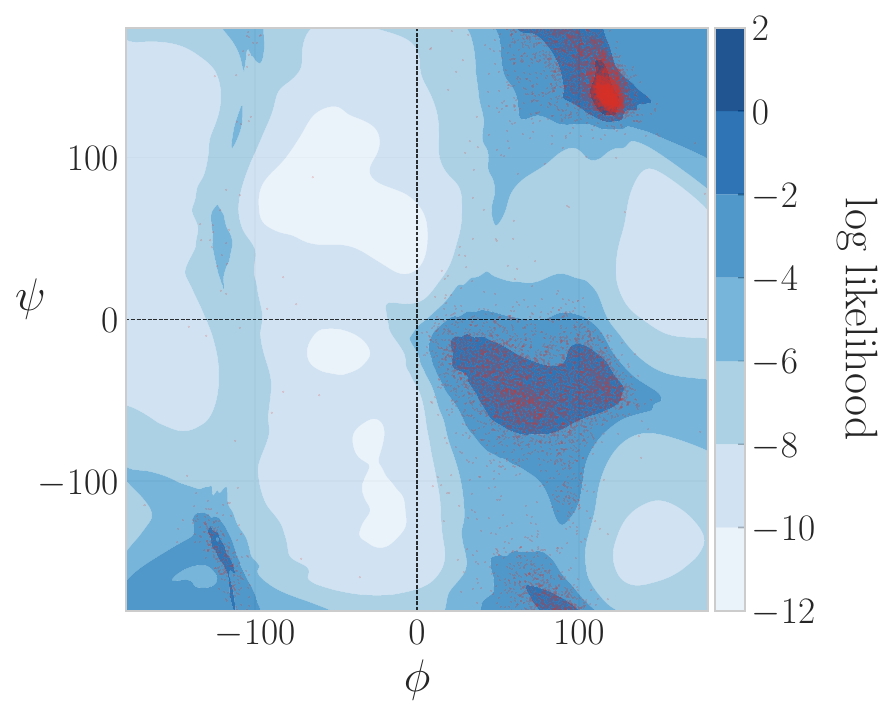}
        \caption{RFM}
        \label{fig:rama-rfm}
    \end{subfigure}
    \label{fig:comparison-rama}
    \vspace{-10pt}
\end{figure}

\begin{figure}[!t]
    \centering
    \caption{MMD on protein datasets against the NFE.}
    \vspace{-10pt}
    \includegraphics[width=\linewidth]{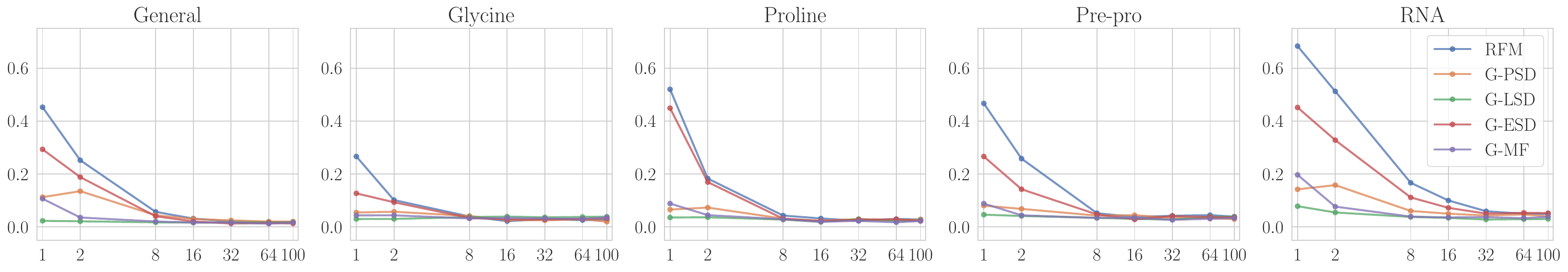}
    \label{fig:nfe_mmd_proteins}
    \vspace{-10pt}
\end{figure}

\begin{table}[tbp]
\centering
\caption{MMD for 1 NFE with the test-set for proteins torsion angles and RNA backbones. Standard deviation estimated over 5 seeds.}
\vspace{-5pt}
\label{tab:protein_mmd_1nfe}
\resizebox{\linewidth}{!}{
\begin{tabular}{lccccc}
\toprule
 & \textbf{General (2D)} & \textbf{Glycine (2D)} & \textbf{Proline (2D)} & \textbf{Pre-Pro (2D)} & \textbf{RNA (7D)} \\
\midrule
RFM~\citep{rfm} & $0.45 \pm 0.006$ & $0.27 \pm 0.008$ & $0.52 \pm 0.057$ & $0.47 \pm 0.022$ & $0.68\pm 0.011$ \\
\cmidrule(lr){1-6}
\ourslsd (ours) & $\mathbf{0.02 \pm 0.003}$ & $\mathbf{0.03 \pm 0.004}$ & $\mathbf{0.04 \pm 0.012}$ & $\mathbf{0.05 \pm 0.004}$ & $\mathbf{0.08 \pm 0.007}$ \\
\ourspsd (ours) & $0.11 \pm 0.016$ & $0.05 \pm 0.019$ & $0.07\pm 0.011$ & $0.08 \pm 0.015$ & $0.14 \pm 0.027$ \\
\oursesd (ours) & $0.29\pm0.002$ & $0.13 \pm 0.006$ & $0.44 \pm 0.024$ & $0.26 \pm 0.016$ & $0.45 \pm 0.006$ \\
\oursmf (ours) & $0.11\pm0.029$ & $0.04 \pm 0.019$ & $0.09 \pm 0.046$ & $0.09 \pm 0.017$ & $0.20 \pm 0.019$ \\
\bottomrule
\end{tabular}
}
\end{table}

\subsection{Earth catastrophes on the 2-sphere}
\label{sec:experiments_on_earth}

\looseness=-1
We evaluate \nameshort on a collection of geospatial data, represented on the 2-sphere, $\mathbb{S}^2$. The dataset was first introduced in~\citet{riemCNF} and is curated from various sources~\citep{NOAA2020a,NOAA2020b,Brakenridge2017,EOSDIS2020}. We report our results in~\cref{fig:nfe_mmd_earth,tab:nll_earth}. Analogous to tori, we observe that the implicit flow $v^{\theta}_{t,t}$ within \nameshort offers log-likelihoods that outperform those of RFM and all other methods on three out of four datasets, with the exception of ``Volcanoes''. 
To assess sample quality, we again plot the MMD as a function of the number of integration steps, and observe once more great improvement on MMD for all our methods at 1 or 2 NFEs, while preserving high quality for higher NFEs. Finally, we provide the densities learnt by RFM and G-LSD on 2D Earth plots in~\cref{fig:earth_all_densities} (with the remaining plots in~\cref{fig:other_earth}), and remark similar likelihood landscapes between RFM and G-LSD. 

\begin{table}[h]
\centering

\caption{Test NLL on the Earth datasets. Standard deviation is estimated over 5 runs. $^\dagger$ indicates baseline numbers taken from~\citet{huang2022riemannian}.}
\vspace{-5pt}
\label{tab:nll_earth}
\resizebox{1\linewidth}{!}{
\begin{tabular}{lcccc}
\toprule
 & \textbf{Volcano} & \textbf{Earthquake} & \textbf{Flood} & \textbf{Fire} \\
\midrule
\textbf{Dataset size} & $827$ & $6{,}120$ & $4{,}875$ & $12{,}809$ \\
\midrule
Mixture of Kent$^\dagger$ & $-0.80 {\pm 0.47}$ & $0.33 {\pm 0.05}$ & $0.73 {\pm 0.07}$ & $-1.18 {\pm 0.06}$ \\
Riemannian CNF$^\dagger$ ~\citep{mathieu2020riemannian} & $-0.97 {\pm 0.15}$ & $0.19 {\pm0.04}$ & $0.90 {\pm0.03}$ & $-0.66 {\pm0.05}$ \\
    Moser Flow$^\dagger$  \citep{rozen2021moser} & $-2.02 {\pm 0.42}$ & $-0.09 {\pm0.02}$ & $0.62 {\pm 0.04}$ & $-1.03 {\pm 0.03}$ \\
Stereographic Score-Based$^\dagger$  & ${-4.18} {\pm 0.30}$ & ${-0.04} {\pm 0.11}$ & ${1.31} {\pm 0.16}$ & $0.28 {\pm 0.20}$ \\
RSGM$^\dagger$  \citep{de2022riemannian} & $-5.56 {\pm0.26}$ & $-0.21 {\pm0.03}$ & $0.52 {\pm0.02}$ & $-1.24 {\pm 0.07}$\\
RDM$^\dagger$ ~\citep{huang2022riemannian} & $-6.61 {\pm 0.97}$ & $-0.40 {\pm 0.05}$ & $0.43 {\pm 0.07}$ & $-1.38 {\pm0.05}$\\
RFM~\citep{rfm} & $\mathbf{-7.93 \pm 1.67}$ & $-0.28 \pm 0.08$ & $0.42 \pm 0.05$ & $-1.86 \pm 0.11$ \\
\cmidrule(lr){1-5}
\ourslsd (ours) & $-4.96\pm0.68$ & $-0.93\pm 0.01$ & $-0.38\pm0.33$ & $-2.14\pm0.42$\\
\ourspsd (ours) & $-3.50\pm0.22$ & $-0.63\pm0.13$ & ${-0.76\pm0.13}$ & $\mathbf{-2.48\pm0.71}$\\
\oursesd (ours) & $-4.49\pm0.20$ & $-0.67\pm0.08$ & $\mathbf{-0.88\pm0.38}$ & $-2.29\pm0.08$\\
\oursmf (ours) & $-3.73\pm0.41$ & $\mathbf{-1.08\pm0.09}$ & $-0.72\pm0.11$ & $-2.24\pm0.30$\\
\bottomrule
\end{tabular}
}
\vspace{-10pt}
\end{table}

\begin{figure}[!hbp]
    \centering
    \caption{MMD on Earth datasets against the NFE.}
    \vspace{-5pt}
    \includegraphics[width=\linewidth]{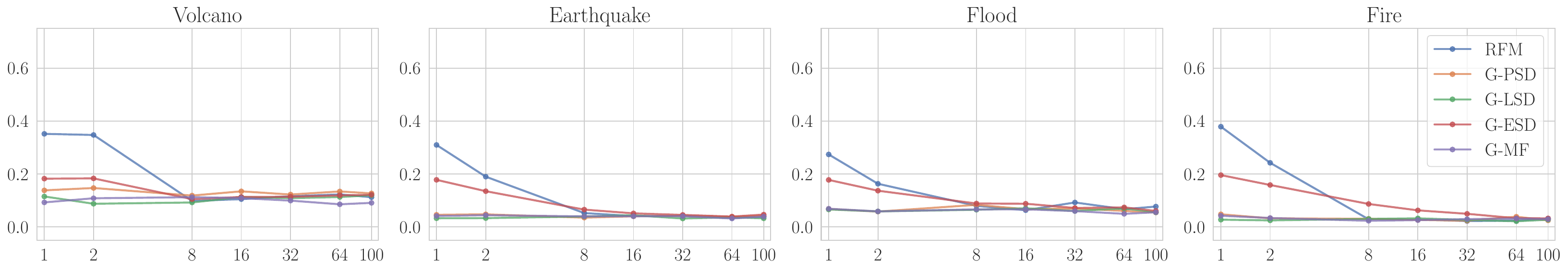}
    \label{fig:nfe_mmd_earth}
\end{figure}

\begin{figure}[htp]
  \centering
  \caption{Plots of densities for the various datasets and all compared methods. Depicted in red are the test-set samples. Datasets from left to right: volcano, earthquake, flood, fire.}
  \vspace{-10pt}
  \label{fig:earth_all_densities}
  \resizebox{\linewidth}{!}{
  \begin{tabular}{m{0.1\linewidth}m{0.9\linewidth}}
    RFM
      & \includegraphics[width=\linewidth]{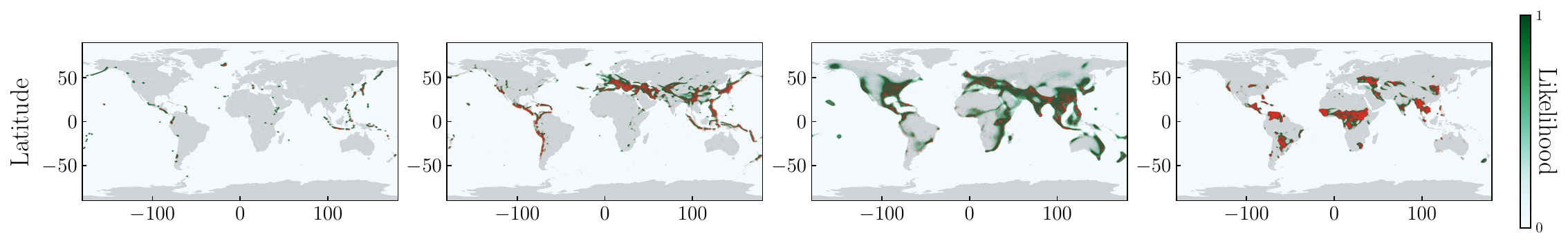}\\
    \ourslsd
      & \includegraphics[width=\linewidth]{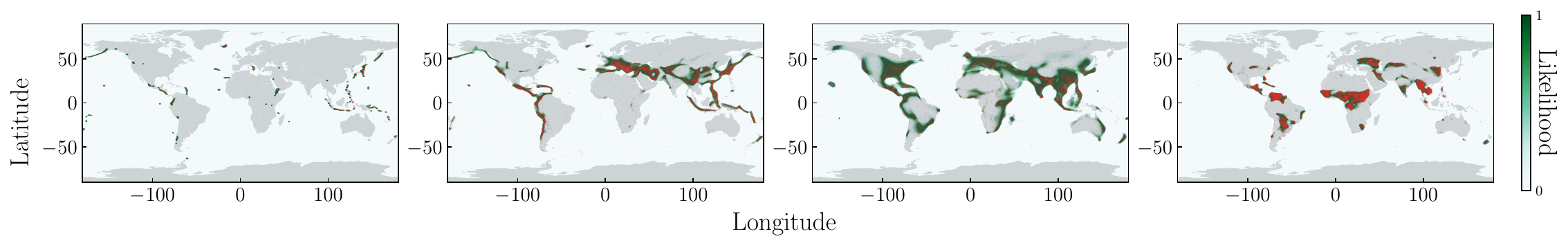} \\
  \end{tabular}
  }
  
\end{figure}

\begin{figure}[tbp]
\begin{minipage}[t]{0.49\linewidth}
    \centering
    \caption{MMD on the hyperbolic dataset against the NFE.}
    \label{fig:nfe_mmd_hyperbolic}
    \includegraphics[width=\linewidth]{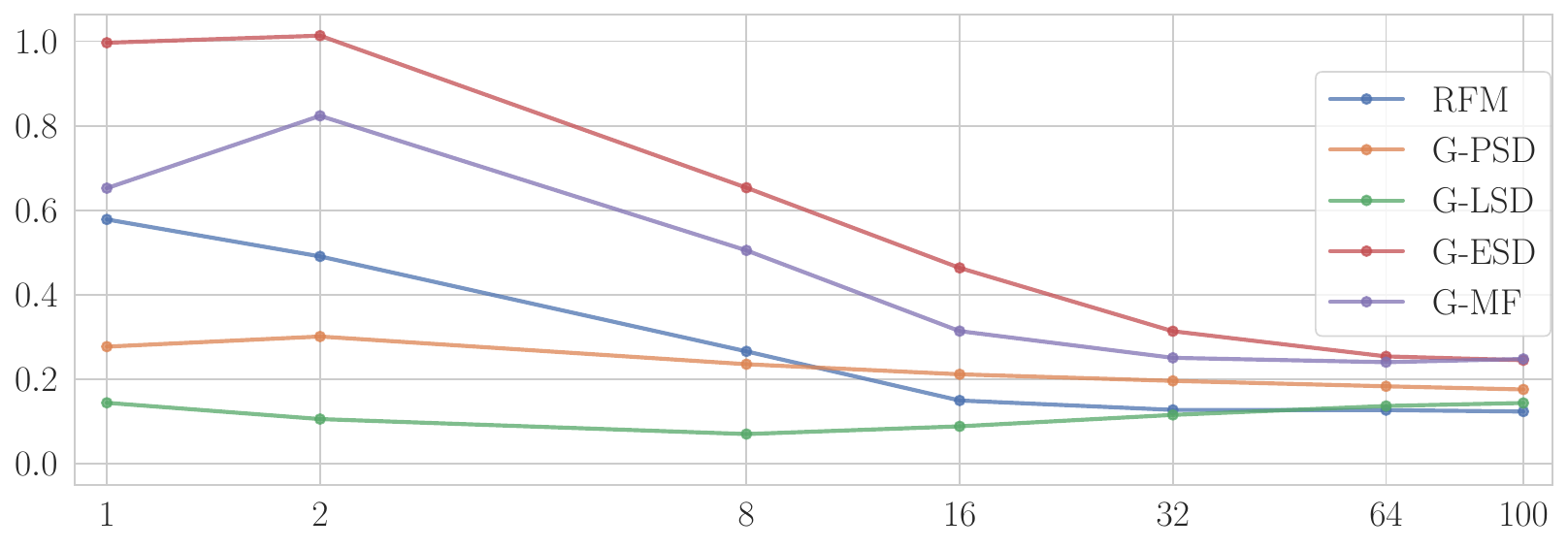}
\end{minipage}
\hfill
\begin{minipage}[t]{0.49\linewidth}
\centering
\captionof{table}{Results on the \sorth~test-set with standard deviation estimated over 5 seeds.}
\label{tab:so3_results}
\resizebox{\linewidth}{!}{
    \begin{tabular}{lccc c}
\toprule
 & \multicolumn{3}{c}{\textbf{MMD}} & \multirow{2}{*}{\textbf{NLL}}\\
\cmidrule(lr){2-4}
 & 1 NFE & 2 NFE & 100 NFE & \\
\midrule
RFM & $0.147\pm0.007$ & $0.083\pm0.003$ & $0.042\pm0.002$ & ${-7.15\pm0.03}$ \\
\midrule
\ourslsd (ours) & $\mathbf{0.064\pm0.007}$ & $\mathbf{0.059\pm0.005}$ & $0.044\pm 0.008$ & $-7.11\pm0.03$ \\
\ourspsd (ours) & $0.121\pm0.01$ & $0.073\pm0.005$ & $\mathbf{0.039\pm0.004}$ & ${-7.15\pm0.03}$ \\
\oursesd (ours) & $0.411\pm0.001$ & $0.408\pm0.001$ & $0.109\pm0.011$ & $\mathbf{-7.20\pm0.02}$ \\
\oursmf (ours) & $0.291\pm0.007$ & $0.280\pm0.015$ & $0.283\pm0.014$ & $-6.85\pm0.03$ \\
\bottomrule
\end{tabular}
}
\end{minipage}
\end{figure}

\subsection{\sorth~synthetic data}
\looseness=-1
We next instantiate \nameshort on the manifold of $3\text{D}$ rotations, \sorth, using the synthetic dataset from~\citet{riemannianDiffusion}. We compute both test NLL and MMD values and report them in~\cref{tab:so3_results}. We find that, on test NLL, all methods perform roughly equally, and there is no clear winner. On MMD, we find that all versions of \nameshort outperform RFM, with $\ourslsd$ being again the most performant. This demonstrates the effectiveness of all our methods on non-trivial manifolds.

\subsection{Hyperbolic Manifolds}
\label{sec:experiments_hyperbolic}

We evaluate \nameshort on a manifold with a non-trivial metric, in particular, on the Poincar\'e ball for hyperbolic geometry. We draw $20{,}000$ samples from a target distribution that is wrapped normal, which we then use to train all methods, including RFM. We report the MMD as a function of the NFEs in~\cref{fig:nfe_mmd_hyperbolic}, and observe that all versions of \nameshort outperform RFM, even at high NFE, except for G-MF and G-ESD. Indeed, it seems that the variance during training was higher than for the other methods, which caused it to under-fit the distribution at lower time steps.

\section{Related work}
\label{sec:related_work}

\looseness=-1
\xhdr{Riemannian generative models} The most related early efforts to build manifold structure into generative models come from conventional normalising flows built out of iterative coupling layers~\citep{tabak2010, rezende2016variational, dinh2017density}, where each coupling layer was designed to preserve the manifold structure~\citep{bose2020latentvariable, rezende2020normalizing, kanwar2020, boyda2021, bose2021equivariant}. These ideas have also been extended to continuous-time flows and diffusions on general Riemannian structures~\citep{lou2020neuralmanifold, mathieu2020riemannian, falorsi2020neural, debortoli2022riemannian}, and their optimisation has also been made simulation-free~\citep{rozen2021moser, rfm, bose2020latentvariable,fisher}. Here, our focus is on extending these approaches so that the more efficient any-step flow map is well-posed and learnable on general geometries. 

\looseness=-1
\xhdr{Accelerated inference in generative models} Early work on accelerated inference focused on a teacher-student procedure~\citep{song2020denoising,luhman2021knowledge,salimans2022progressive, meng2023distillation} where an expensive inference model is distilled to produce the same output in fewer steps. Orthogonal to this have been efforts to parallel diffusion inference with adaptive~\citep{chen2024accelerating, dockhorn2022geniehigherorder, shih2023parallel, tang2024parallel} or speculative \citep{debortoli2025accelerated} schemes. Consistency models aim to directly learn the one-step map to the data distribution from any point along the trajectory~\citep{song2023consistency, kim2024consistency, song2023improved}. The flow map~\citep{boffi2024flow} has emerged as a unifying picture, and recent efforts have shown how to distill it~\citep{sabour2024align} or directly learn it~\citep{flowmaps, meanflows} using the equations that characterise it. We take the leap here to show how to generalise this complete class of models to the Riemannian setting for performant ends.

\looseness=-1
\xhdr{Concurrent work} Concurrent and most related to our work is that of~\citet{cheng2025riemannianconsistencymodel}. The authors propose a new method, Riemannian Consistency Models (RCM), to train few-step generative models on Riemannian manifolds from scratch. Their approach directly ports the original Consistency Models~\citep{song2023consistency} to Riemannian geometries, relying on more sophisticated geometric constructions. In contrast, \nameshort enjoys simpler practical instantiations as it relies on the self-distillation of the Flow Maps framework~\citep{flowmaps} and recovers shortcut models as a special case of the PSD objective executed on Riemannian manifolds.


\section{Conclusion}
\label{sec:conclusion}

\looseness=-1
We propose \namelong, a new class of geometric generative models that are capable of performing few-step inference on arbitrary Riemannian manifolds. To build \nameshort, we provide three equivalent theoretical conditions that characterise a flow map on manifolds, and three corresponding self-distillation objectives. We demonstrate the empirical performance of \nameshort in the low NFE regime and achieve state-of-the-art results in sample-based metrics, and competitive test likelihoods in comparison to RFM. While each \nameshort condition leads to a different corresponding objective, at present, the Lagrangian objective remains the most performant in generating high-quality samples, and understanding this empirical observation from a theoretical lens is a natural direction for future work. Additionally, as we empirically demonstrate, the implicit flow within the flow map may lead to better NLL than the flow learned through RFM, which points to an interesting direction for gaining theoretical understanding in future work.

\section*{Acknowledgements}
OD is funded by both Project CETI and Intel.  AJB is partially supported by an NSERC Post-doc fellowship. This research is partially supported by the EPSRC Turing AI World-Leading Research Fellowship No. EP/X040062/1 and EPSRC AI Hub No. EP/Y028872/1. MSA is supported by a Junior Fellowship at the Harvard Society of Fellows as well as the National Science Foundation under Cooperative Agreement PHY-2019786 (The NSF AI Institute for Artificial Intelligence and Fundamental Interactions, http://iaifi.org/). This work has been made possible in part by a gift from the Chan Zuckerberg Initiative Foundation to establish the Kempner Institute for the Study of Natural and Artificial Intelligence.

\ifarxiv
\else
\section*{Ethics Statement}
We hereby acknowledge and declare to have abided by the ICLR Code of Ethics. Our current work does not utilise any sensitive data, nor does it directly enable nefarious usage, although other derivative works could use ours in unethical contexts. The authors declare no competing financial interests.

\section*{Reproducibility Statement}
All of our results are reproducible and have been run with the same 5 random seeds that were set in advance, and that were never changed throughout our evaluation, to provide us our standard deviations. The datasets we use are entirely public and are freely available to all. Our code will be made public upon acceptance alongside detailed instructions for running it locally, and the exact configuration files that were used.
\fi

\clearpage

\bibliography{references}
\bibliographystyle{iclr2026_conference}

\clearpage
\appendix

\section{Self-Distillation on Euclidean spaces}
\label{app:flow_map_losses_original}
In this section, we provide a self-contained review of the three self-distillation based objective functions introduced by~\citet{flowmaps}.
These extend upon the original distillation-based objectives introduced in~\citet{boffi2024flow}.
To this end, we first recall three key properties of the flow map.
\begin{proposition}[Flow map characterization]
\label{prop:flow_map_char}
Let $\partial_t{x}_t = v_t(x_t)$ denote a probability flow ODE, and let $X_{s, t}: [0, 1]^2 \to \R^d$ denote its flow map.
Then,
\begin{enumerate}[label=\roman*.]
    \item $X_{s, t}$ is the unique solution to the Lagrangian equation,
    \begin{equation}
        \label{eqn:lagrangian}
        \partial_t X_{s, t}(x) = v_t(X_{s, t}(x)) \qquad \forall\: (x, s, t) \in \R^d \times [0, 1]^2.
    \end{equation}
    \item $X_{s, t}$ is the unique solution to the Eulerian equation,
    \begin{equation}
    \label{eqn:eulerian}
        \partial_s X_{s, t}(x) + \nabla X_{s, t}(x)v_s(x) = 0 \qquad \forall\: (x, s, t) \in \R^d \times [0, 1]^2.
    \end{equation}
    \item $X_{s, t}$ satisfies the semigroup condition,
    \begin{equation}
        X_{s, t}(x) = X_{u, t}(X_{s, u}(x)) \qquad \forall\: (x, s, u, t) \in \R^d \times [0, 1]^3.
    \end{equation}
\end{enumerate}
\end{proposition}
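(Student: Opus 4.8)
These three claims are the flat, manifold-free prototypes of \cref{prop:gfm_characterization}, so the argument is just the classical theory of the ODE solution operator, and I would organise it around well-posedness of the initial value problem. First I would record the standing regularity assumption that $v$ is continuous in $t$ and locally Lipschitz in $x$, uniformly on $[0,1]$: by Picard--Lindelöf this gives, for every $(s,x)$, a unique solution $t\mapsto X_{s,t}(x)$ of $\partial_t z = v_t(z)$ with $z(s)=x$, and by the theorem on smooth dependence on initial data this solution is $C^1$ in $x$, which is precisely what makes the spatial Jacobian $\nabla X_{s,t}$ appearing in the Eulerian equation well-defined. The boundary condition $X_{s,s}=\mathrm{Id}$ is built into the definition via the parametrisation $X_{s,t}(x)=x+(t-s)v_{s,t}(x)$.

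\textbf{Claims (i) and (iii).} Claim (i) holds by construction: $t\mapsto X_{s,t}(x)$ is by definition the solution through $(s,x)$, so $\partial_t X_{s,t}(x)=v_t(X_{s,t}(x))$; and if $Y_{s,t}$ also satisfies \eqref{eqn:lagrangian} with $Y_{s,s}=\mathrm{Id}$, then for fixed $s,x$ the curve $t\mapsto Y_{s,t}(x)$ solves the same initial value problem, so ODE uniqueness forces $Y=X$. For claim (iii), fix $x,s,u$: both $t\mapsto X_{s,t}(x)$ and $t\mapsto X_{u,t}\bigl(X_{s,u}(x)\bigr)$ solve $\partial_t z = v_t(z)$ (the first by (i), the second because $X_{u,\cdot}$ is a solution operator), and they agree at $t=u$ since $X_{u,u}\bigl(X_{s,u}(x)\bigr)=X_{s,u}(x)$; uniqueness of the initial value problem then gives $X_{u,t}(X_{s,u}(x))=X_{s,t}(x)$ for all $t$.

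\textbf{Claim (ii).} I would obtain the Eulerian equation by differentiating (iii). Since the right-hand side of $X_{u,t}(X_{s,u}(x))=X_{s,t}(x)$ does not depend on $u$, we have $\partial_u\bigl[X_{u,t}(X_{s,u}(x))\bigr]=0$; writing $\partial_1 X$ for the derivative in the first time slot and applying the chain rule,
\[
(\partial_1 X)_{u,t}\bigl(X_{s,u}(x)\bigr) + \nabla X_{u,t}\bigl(X_{s,u}(x)\bigr)\,\partial_u X_{s,u}(x) = 0,
\]
and $\partial_u X_{s,u}(x)=v_u(X_{s,u}(x))$ by (i). Evaluating at $u=s$, where $X_{s,s}(x)=x$, yields $\partial_s X_{s,t}(x)+\nabla X_{s,t}(x)\,v_s(x)=0$ (with $b_s\equiv v_s$). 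For uniqueness, suppose $Y_{s,t}$ solves \eqref{eqn:eulerian} with $Y_{t,t}=\mathrm{Id}$; fix $t,x$ and $s_0$ and set $\phi(s):=Y_{s,t}\bigl(X_{s_0,s}(x)\bigr)$ for $s\in[s_0,t]$. Then $\phi'(s)=(\partial_1 Y)_{s,t}(X_{s_0,s}(x))+\nabla Y_{s,t}(X_{s_0,s}(x))\,v_s(X_{s_0,s}(x))=0$ by the Eulerian equation, so $\phi$ is constant, and comparing $\phi(s_0)=Y_{s_0,t}(x)$ with $\phi(t)=Y_{t,t}(X_{s_0,t}(x))=X_{s_0,t}(x)$ gives $Y_{s_0,t}(x)=X_{s_0,t}(x)$; since $s_0$ was arbitrary, $Y=X$.

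\textbf{Main obstacle.} The only genuine subtlety is analytic rather than combinatorial: part (ii) presupposes that $X_{s,t}$ is differentiable in its spatial argument so that $\nabla X_{s,t}$ exists, which rests on smooth dependence of ODE solutions on initial data and hence on the stated Lipschitz/regularity hypothesis on $v$; and one must keep the chain-rule bookkeeping straight when differentiating the semigroup identity, carefully separating the two time slots and invoking (i) for the $u$-derivative of $X_{s,u}$. Proving (i) and (iii) first and then extracting (ii) as their differential consequence is the ordering that keeps this bookkeeping minimal.
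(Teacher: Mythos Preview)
Your proposal is correct and in fact more complete than the paper's own proof. The paper explicitly only establishes the forward direction---that the flow map satisfies each of the three conditions---and defers uniqueness to a citation; you supply full uniqueness arguments, the one for the Eulerian equation via the constant-in-$s$ auxiliary function $\phi(s)=Y_{s,t}(X_{s_0,s}(x))$ being particularly clean.

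The organisational route also differs. The paper derives all three conditions independently from the jump identity $X_{s,t}(x_s)=x_t$: the Lagrangian by $\partial_t$, the Eulerian by taking the total $s$-derivative (where $x_s$ itself depends on $s$), and the semigroup by direct substitution. You instead prove (i) and (iii) first from ODE well-posedness and then extract (ii) by differentiating the semigroup identity in $u$ and evaluating at $u=s$. The two derivations of the Eulerian equation are essentially the same computation in different notation---the paper's $x_s$ is your $X_{s_0,s}(x)$---but your ordering makes the logical dependencies explicit and avoids having to reason about a trajectory point $x_s$ whose label is simultaneously the differentiation variable. What your approach buys is a self-contained treatment with uniqueness; what the paper's buys is brevity at the cost of outsourcing half the statement.
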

\begin{proof}
    We simply prove the forward direction: that the flow map satisfies each property.
    The proof of uniqueness follows similarly and can be found in~\citet{flowmaps}.

    Each property follows from the defining condition
    \begin{equation}
        \label{eqn:jump}
       X_{s, t}(x_s) = x_t \qquad \forall\: (s, t) \in [0, 1]^2. 
    \end{equation}
    Taking the derivative with respect to time, we find that
    \begin{equation}
    \begin{aligned}
        \partial_t X_{s, t}(x_t) &= \partial_t x_t,\\
        &= v_t(x_t),\\
        &= v_t(X_{s, t}(x_s)),
    \end{aligned}
    \end{equation}
    which is the Lagrangian equation.
    Noting that $x_s$ was arbitrary completes the proof.

    Taking the total derivative of~\eqref{eqn:jump} with respect to $s$, we find
    \begin{equation}
       \label{eqn:eulerian_derivation} 
       \begin{aligned}
        \partial_s X_{s, t}(x_s) + \nabla X_{s, t}(x_s)\partial_t{x}_s &= 0,\\
       \implies \partial_s X_{s, t}(x_s) + \nabla X_{s, t}(x_s)b_s(x_s) &= 0. 
       \end{aligned}
    \end{equation}
    Again, noting that $x_s$ was arbitrary completes the proof.

    The semigroup condition follows simply by noting that
    \begin{equation}
        X_{u, t}(X_{s, u}(x_s)) = X_{u, t}(x_u) = x_t = X_{s, t}(x_s).
    \end{equation}
    This completes the proof.
\end{proof}
As discussed in the main text, from the Lagrangian condition in~\eqref{eqn:lagrangian}, we may observe that
\begin{equation}
    \lim_{s \to t}\partial_t X_{s, t}(x) = \lim_{s \to t}v_t(X_{s, t}(x)) = v_t(x)
\end{equation}
assuming continuity of $X$ and using that $X_{t, t}(x) = x$ for all $(x, t) \in \R^d \times [0, 1]$.
Moreover, using the parameterization
\begin{equation}
    \label{eqn:param}
    X_{s, t}(x) = x + (t-s)v_{s, t}(x)
\end{equation}
we find that
\begin{equation}
    \lim_{s \to t}\partial_t X_{s, t}(x) = \lim_{s\to t}\left\{(t-s)\partial_tv_{s, t}(x) + v_{s, t}(x) \right\} = v_{t, t}(x) = v_t(x).
\end{equation}
This observation immediately leads to three ``self-distillation" schemes, each of which proceeds by training the flow $v_{t, t}(x)$ on the diagonal $s = t$ via flow matching, and simultaneously distilling it into a flow map by minimizing the square residual on one of the conditions in~\cref{prop:flow_map_char}.

\begin{proposition}[Euclidean self-distillation~\citep{flowmaps}]
\label{prop:sd_euclid}
Let $X^{\theta}_{s, t}(x) = x + (t-s)v^{\theta}_{s, t}(x)$ denote a candidate flow map, and assume that $v^{\theta}$ is continuous in both time arguments.
Moreover, let
\begin{equation}
    \label{eqn:fm}
    \mathcal{L}_b(v^{\theta}) = \int_0^1\E\left[\left|v^{\theta}_{t, t}(I_t) - \partial_t{I}_t\right|^2\right]dt
\end{equation}
denote the flow matching loss on the diagonal $s=t$.
Then, the ideal flow map $X_{s, t}(x) = x + (t-s)v_{s, t}(x)$ corresponds to the unique minimizer over $v^{\theta}$ of each of the following objective functions.
\begin{enumerate}[label=\roman*.]
    \item The Lagrangian self-distillation loss,
    \begin{equation}
        \label{eqn:lsd}
       \lsd = \calL_b(v^{\theta}) + \int_0^1\int_0^t\E\left[\left|\partial_t X^{\theta}_{s, t}(I_s) - v^{\theta}_{t, t}(X^{\theta}_{s, t}(I_s))\right|^2\right]\ddd s \ddd t,
    \end{equation}
    \item The Eulerian self-distillation loss,
    \begin{equation}
        \label{eqn:esd}
       \esd = \calL_b(v^{\theta}) + \int_0^1 \int_0^t\E\left[\left|\partial_s X^{\theta}_{s, t}(I_s) + \nabla X^{\theta}_{s, t}(I_s)v^{\theta}_{s, s}(I_s)\right|^2\right]\ddd s\ddd t,
    \end{equation}
    \item The progressive self-distillation loss,
    \begin{equation}
       \label{eqn:psd} 
       \begin{aligned}
        &\psd = \calL_b(v^{\theta})\\
        & + \int_0^1\int_0^t\int_0^1\E\left[\left|v^{\theta}_{s, t}(I_s) - \gamma v^{\theta}_{s, u}(I_s) - (1-\gamma) v^{\theta}_{u, t}(X^{\theta}_{s, u}(I_s))\right|^2\right]\ddd s\ddd t \ddd\gamma,
       \end{aligned}
    \end{equation}
    where $u = (1-\gamma) s + \gamma t$ with $\gamma \in [0, 1]$.
\end{enumerate} 
\end{proposition}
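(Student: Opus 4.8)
The plan is to exploit the fact that each of $\lsd$, $\esd$, $\psd$ has the form $\calL_b(v^\theta)$ plus a manifestly nonnegative residual integral built from one of the three characterizations in \cref{prop:flow_map_char}. Since $\calL_b(v^\theta)\ge\calL_b^\star:=\int_0^1\E[|v_t(I_t)-\partial_t I_t|^2]\,\ddd t$ and the residual is bounded below by $0$, the total loss is bounded below by $\calL_b^\star$, and any minimizer must \emph{simultaneously} attain $\calL_b^\star$ in its first term and annihilate the residual. For the existence half, the first term is the usual flow-matching loss, whose minimizer over the diagonal is the conditional expectation $\E[\partial_t I_t\mid I_t=\cdot]=v_t$ (the $L^2$-projection onto functions of $I_t$), so the genuine flow map's diagonal $v_{t,t}=v_t$ attains $\calL_b^\star$; and with the diagonal equal to $v_t$, each residual integrand is exactly the squared defect of the Lagrangian, Eulerian, or semigroup condition evaluated along $I_s$, which vanishes for the genuine flow map by the forward direction of \cref{prop:flow_map_char} (already established). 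Hence $X_{s,t}(x)=x+(t-s)v_{s,t}(x)$ minimizes each loss.

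For the uniqueness half, let $v^\theta$ be any minimizer. Attaining $\calL_b^\star$ forces $v^\theta_{t,t}=v_t$ on $\mathrm{supp}\,\rho_t$ for a.e.\ $t$, hence everywhere by continuity once $\rho_0$ (and thus each $\rho_t$) has full support. Substituting this into the vanished residual, the Lagrangian case gives $\partial_t X^\theta_{s,t}(x)=v_t(X^\theta_{s,t}(x))$ with $X^\theta_{s,s}=\mathrm{Id}$; the classical ODE uniqueness theorem (under the spatial Lipschitz regularity of $v$ that makes the probability-flow ODE well-posed) identifies $t\mapsto X^\theta_{s,t}(x)$ with the unique trajectory through $x$ at time $s$, i.e.\ with $X_{s,t}(x)$, and dividing by $t-s$ gives $v^\theta_{s,t}=v_{s,t}$. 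The Eulerian case is identical modulo replacing the ODE by the linear transport equation $\partial_s X^\theta_{s,t}+\nabla X^\theta_{s,t}\,v_s=0$ with terminal data $X^\theta_{t,t}=\mathrm{Id}$, whose unique solution is the flow map by the method of characteristics -- precisely the uniqueness content of the Eulerian part of \cref{prop:flow_map_char}.

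For the progressive loss, after the substitution $u=(1-\gamma)s+\gamma t$ the residual is the squared defect of the semigroup relation $X^\theta_{s,t}=X^\theta_{u,t}\circ X^\theta_{s,u}$, so its vanishing over all $\gamma\in[0,1]$ enforces that identity for every $s\le u\le t$ (again on the supports). The key reduction is then to differentiate $X^\theta_{s,t}(x)=X^\theta_{u,t}(X^\theta_{s,u}(x))$ in $u$, evaluate at $u=s$, and use $\partial_u X^\theta_{s,u}(x)\big|_{u=s}=v^\theta_{s,s}(x)=v_s(x)$ -- which drops out of the parameterization $X^\theta_{s,u}(x)=x+(u-s)v^\theta_{s,u}(x)$ exactly as in the tangent-condition computation preceding the proposition -- to recover the Eulerian equation $\partial_s X^\theta_{s,t}(x)+\nabla X^\theta_{s,t}(x)\,v_s(x)=0$; the previous paragraph then closes the argument. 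I expect this last reduction to be the main obstacle: it requires $u\mapsto X^\theta_{s,u}$ to be $C^1$ (implicit in the losses being well-defined, but stronger than the stated continuity hypothesis) and some care at the diagonal $u=s$. The only other subtlety worth flagging is that all three residuals constrain $v^\theta$ only along $\mathrm{supp}\,\rho_s$, so the uniqueness claim should be read modulo $\rho_s$-a.e.\ equivalence unless $\rho_0$ is assumed to have full support -- which is the case in the settings of interest.
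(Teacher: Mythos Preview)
Your argument is correct and follows the same lower-bound-plus-nonnegative-residual skeleton as the paper: bound each loss below by $\calL_b(v)$, show the genuine flow map attains that bound via the forward direction of \cref{prop:flow_map_char}, and then argue uniqueness from the vanishing residual together with $v^\theta_{t,t}=v_t$.

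The one place you go beyond the paper's sketch is the PSD uniqueness argument. The paper spends its effort on the algebraic step you pass over quickly---expanding the parameterisation to show that the PSD residual is literally $(t-s)^{-2}$ times the squared semigroup defect---and then simply invokes \cref{prop:flow_map_char}. But note that item~(iii) of \cref{prop:flow_map_char} as stated there only asserts that the flow map \emph{satisfies} the semigroup identity, not that it is the unique map doing so with the given diagonal; the paper defers that uniqueness to the original reference. Your reduction---differentiate $X^\theta_{s,t}=X^\theta_{u,t}\circ X^\theta_{s,u}$ in $u$ at $u=s$, use $\partial_u X^\theta_{s,u}|_{u=s}=v^\theta_{s,s}=v_s$, and recover the Eulerian equation---actually closes that gap cleanly and is a nice self-contained addition. (Differentiating at $u=t$ instead yields the Lagrangian equation just as easily.) Your caveats about needing $C^1$ regularity in $u$ and about the residuals constraining $v^\theta$ only on $\mathrm{supp}\,\rho_s$ are accurate and worth keeping; the paper does not flag them.
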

\begin{proof}
For completeness, and for ease of the reader, we sketch the proof from~\citet{flowmaps}.
We first observe that $\calL_b(v^{\theta}) \geq \calL_b(v)$ where $v_t(x) = \E[\partial_t{I}_t | I_t = x]$ is the ideal flow.
From this, it follows that $\lsd \geq \calL_b(v)$, $\esd \geq \calL_b(v)$, and $\psd \geq \calL_b(v)$, because the second term in each loss is non-negative.
Moreover, by~\cref{prop:flow_map_char}, the ideal flow map satisfies the Lagrangian~\eqref{eqn:lagrangian} and the Eulerian~\eqref{eqn:eulerian}, so that the second term is zero for both $\lsd$ and $\esd$ at the true flow map.

It remains to show that the second loss in~\eqref{eqn:psd} imposes the semigroup condition.
To see this, we observe that we may write the semigroup condition as
\begin{equation}
\label{eqn:semigroup_derivation}
\begin{aligned}
    X_{s, t}(x) &= X_{u, t}(X_{s, u}(x)),\\ 
    \iff x + (t-s)v_{s, t}(x) &= X_{s, u}(x) + (t-u)v_{u, t}(X_{s, u}(x)),\\
    \iff x + (t-s)v_{s, t}(x) &= x + (u - s)v_{s, u}(x) + (t-u)v_{u, t}(X_{s, u}(x)),\\
    \iff (t-s)v_{s, t}(x) &= ((1 - \gamma)s + \gamma t - s)v_{s, u}(x)\\
    &\qquad + (t-(1 - \gamma)s - \gamma t))v_{u, t}(X_{s, u}(x)),\\
    \iff (t-s)v_{s, t}(x) &= \gamma (t-s)v_{s, u}(x) + (1-\gamma)(t-s)v_{u, t}(X_{s, u}(x)),\\
    \iff v_{s, t}(x) &= \gamma v_{s, u}(x) + (1-\gamma) v_{u, t}(X_{s, u}(x)).
\end{aligned}
\end{equation}
From the sequence of operations in~\eqref{eqn:semigroup_derivation}, we see that the second term in~\eqref{eqn:psd} penalises the square residual on a rescaled semigroup condition.
The minimiser therefore satisfies the semigroup condition, and by~\cref{prop:flow_map_char} is the ideal flow map.
\end{proof}

\xhdr{Stopgradients}
In practice, it is beneficial, when optimising over neural networks, to use the $\sg(\cdot)$ operator to control the flow of information.
This allows us to interpret the flow model $v^{\theta}_{t, t}$ as a ``teacher", which is used to provide a training signal for the map on the off diagonal $s \neq t$.
If the squared residuals presented in~\cref{prop:sd_euclid} are minimized directly, gradients can align the trained flow model with the untrained flow map, rather than vice-versa.
To this end, we provide some concrete recommendations inspired by the setting when a frozen pre-trained teacher model is available.
\begin{enumerate}
    \item For LSD, we recommend:
    \begin{equation}
        \label{eqn:lsd_sg}
        \lsd = \calL_b({\theta}) + \int_0^1\int_0^t\E\left[\left|\partial_t X^{\theta}_{s, t}(I_s) - \sg(v^{\theta}_{t, t}(X^{\theta}_{s, t}(I_s)))\right|^2\right]\ddd s\ddd t,
    \end{equation}
    \item For ESD, we recommend:
    \begin{equation}
        \label{eqn:esd_sg}
        \esd = \calL_b({\theta}) + \int_0^1 \int_0^t\E\left[\left|\partial_s X^{\theta}_{s, t}(I_s) + \sg(\nabla X^{\theta}_{s, t}(I_s)v^{\theta}_{s, s}(I_s))\right|^2\right]\ddd s\ddd t,
    \end{equation}
    \item For PSD, we recommend:
    \begin{equation}
       \label{eqn:psd_sg} 
       \begin{aligned}
        &\psd = \calL_b({\theta})\\
        & + \int_0^1\int_0^t\int_0^1\E\left[\left|v^{\theta}_{s, t}(I_s) - \sg(\gamma v^{\theta}_{s, u}(I_s) + (1-\gamma) v^{\theta}_{u, t}(X^{\theta}_{s, u}(I_s)))\right|^2\right]\ddd s\ddd t\ddd\gamma,
       \end{aligned}
    \end{equation}
    where $u = (1-\gamma) s + \gamma t$ with $\gamma \in [0, 1]$.
\end{enumerate}
The recommendations in~\eqref{eqn:lsd_sg} and~\eqref{eqn:psd_sg} are precisely what would arise given a pre-trained teacher.
The recommendation in~\eqref{eqn:esd_sg} is similar, but also places a $\sg(\cdot)$ on the spatial Jacobian of the model, which often improves optimization stability significantly.

\xhdr{Mean Flows and consistency training}
Given the choice of $\sg(\cdot)$ in~\eqref{eqn:esd_sg}, the resulting parameter gradient will be \textit{linear} in $v^{\theta}_{s, s}$.
We may then replace $v^{\theta}_{s, s}(I_s)$ by the Monte Carlo estimate $\partial_t{I}_s$ of the ideal flow $b_s(x) = \E[\partial_t{I}_s | I_s = x]$, using the tower property of the conditional expectation $\E\left[f(I_s)\partial_t{I}_s\right] = \E\left[f(I_s)\E\left[\partial_t{I}_s | I_s\right]\right] = \E\left[f(I_s)v_s(I_s)\right]$ for any function $f:\R^d\to\R^d$.
Without this choice of $\sg$, this replacement is not possible due to the quadratic term $\E[|\nabla X^{\theta}_{s, t}(I_s)\partial_t{I}_s|^2] \neq \E[|\nabla X^{\theta}_{s, t}(I_s)v_s(I_s)|^2]$ because of the nonlinearity in $|\cdot|^2$.
With this $\sg$, the quadratic term is a constant from the perspective of the gradient and hence this discrepancy can be ignored.

This trick is used by both consistency training~\citep{consistency, sCM} and Mean Flows~\citep{meanflows}; in fact, these algorithms can be recovered by expanding
\begin{equation}
\label{eqn:mean_flow_expansion}
\begin{aligned}
\partial_s X^{\theta}_{s, t}(x) &= -v^{\theta}_{s, t}(x) + (t-s)\partial_s v^{\theta}_{s, t}(x),\\
\nabla X^{\theta}_{s, t}(x) &= I + (t-s)\nabla v^{\theta}_{s, t}(x).
\end{aligned}
\end{equation}
Plugging~\eqref{eqn:mean_flow_expansion} into the Eulerian loss~\eqref{eqn:esd_sg} yields
\begin{equation*}
   \begin{aligned}
   &\calL({\theta}) = \calL_b({\theta})\\
   &+ \int_0^1 \int_0^t\E\left[\left|-v^{\theta}_{s, t}(I_s) + (t-s)\partial_s v^{\theta}_{s, t}(I_s) + \sg\left(v^{\theta}_{s, s}(I_s) + (t-s)\nabla v^{\theta}_{s, t}(I_s)v^{\theta}_{s, s}(I_s)\right)\right|^2\right]\ddd s \ddd t.
   \end{aligned}
\end{equation*}
Replacing $v^{\theta}_{s, s}(I_s)$ by the Monte Carlo estimate of the ideal flow as described above yields
\begin{equation}
   \label{eqn:mean_flow_2}  
   \begin{aligned}
   &\calL(v^{\theta}) = \calL_b(v^{\theta})\\
   &\qquad + \int_0^1 \int_0^t\E\left[\left|-v^{\theta}_{s, t}(I_s) + (t-s)\partial_s v^{\theta}_{s, t}(I_s) + \sg\left(\partial_s{I}_s + (t-s)\nabla v^{\theta}_{s, t}(I_s)\partial_s{I}_s\right)\right|^2\right]\ddd s \ddd t,
   \end{aligned}
\end{equation}
and then choosing to $\sg$ the $\partial_s v^{\theta}_{s, t}$ term as well as the spatial gradient yields the Mean Flow/Consistency Training objective,
\begin{equation}
   \label{eqn:mean_flow}  
   \begin{aligned}
   &\calL(v^{\theta}) = \calL_b(v^{\theta})\\
   &\qquad + \int_0^1 \int_0^t\E\left[\left|v^{\theta}_{s, t}(I_s) - \sg\left((t-s)\partial_s v^{\theta}_{s, t}(I_s) + \partial_s{I}_s + (t-s)\nabla v^{\theta}_{s, t}(I_s)\partial_s{I}_s\right)\right|^2\right]\ddd s \ddd t.
   \end{aligned}
\end{equation}

\section{Proofs}

\subsection{Generalised Flow Map Characterizations}
\label{app:legality_proof}

\begin{mdframed}[style=MyFrame2]
\propgfm*
\end{mdframed}

To show that the the propositions hold, let us first demonstrate the legality of the above claims, that the compared vectors indeed belong to the same tangent planes. For the semigroup condition, observe that $X_{s,t}(x) \in \M$ for any $s$, $t$ and $x$, and therefore is evidently coherent.
\begin{lemma}
\label{lemma:der_tangent}
For any $x \in \M$, $s,t \in [0,1]^2$, $\partial_t X_{s,t}(x) \in \T_{X_{s,t}(x)}\M$.
\end{lemma}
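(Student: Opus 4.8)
The plan is to read the claim straight off the on-manifold parametrisation of the flow map together with the definition of the tangent bundle recalled in \S\ref{sec:background}: fix $x\in\M$ and $s\in[0,1]$ and regard $\gamma_{x,s}\colon t\mapsto X_{s,t}(x)$ as a curve valued \emph{in} $\M$; the $t$-derivative of any such curve is, by definition, a tangent vector based at the curve's current point, and that is exactly what must be shown.

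First I would recall the parametrisation $X_{s,t}(x)=\exp_x\bigl((t-s)\,v_{s,t}(x)\bigr)$ and note that $v_{s,t}(x)\in\T_x\M$ --- this is ensured by the tangential-projection construction $v^{\theta}_{s,t}(p)=\proj_{\T_p\M}(f^{\theta}_{s,t}(p))$ introduced in \S\ref{sec:riemannian_self_distillation}, or is simply part of the assumed regularity of the underlying field. Consequently $t\mapsto (t-s)\,v_{s,t}(x)$ is a $C^1$ curve valued in the fixed vector space $\T_x\M$, and composing it with the smooth map $\exp_x\colon\T_x\M\to\M$ yields a $C^1$ curve $\gamma_{x,s}\colon[0,1]\to\M$ with $\gamma_{x,s}(t)=X_{s,t}(x)$.

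Next I would invoke the defining property of the tangent bundle: for a differentiable curve $\gamma\colon[0,1]\to\M$ one has $\gamma'(\tau)\in\T_{\gamma(\tau)}\M$ for every $\tau$. Applying this to $\gamma_{x,s}$ at time $t$ gives $\partial_t X_{s,t}(x)=\gamma_{x,s}'(t)\in\T_{\gamma_{x,s}(t)}\M=\T_{X_{s,t}(x)}\M$, which is the assertion; since $x$, $s$, and $t$ were arbitrary, it holds for all $x\in\M$ and all $(s,t)\in[0,1]^2$. This lemma is precisely what legitimises comparing $\partial_t X^{\theta}_{s,t}(I_s)$ with $v^{\theta}_{t,t}\!\bigl(X^{\theta}_{s,t}(I_s)\bigr)$ inside the $\|\cdot\|_g$ norm appearing in~\cref{prop:glsd}.

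The only delicate point --- the ``obstacle'', such as it is --- is checking that $t\mapsto v_{s,t}(x)$ is differentiable and genuinely $\T_x\M$-valued, so that the composition with $\exp_x$ is both well-defined and $C^1$; once the map is on honestly manifold-valued footing, the conclusion is immediate from the definition. An alternative would be to pass to a chart around $X_{s,t}(x)$ and differentiate the coordinate representation, but the coordinate-free curve argument avoids chart bookkeeping and is cleaner.
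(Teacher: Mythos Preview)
Your argument is correct and takes a genuinely different route from the paper's. The paper reparametrises the $t$-curve as $\tilde X_u(x)\coloneqq X_{s,(1-s)u+s}(x)$ over $u\in[0,1]$, asserts that this is a \emph{geodesic} joining $x$ to $X_{s,1}(x)$, and then reads off the tangency from the explicit geodesic-velocity formula $\partial_u\tilde X_u(x)=\log_{\tilde X_u(x)}(X_{s,1}(x))/(1-u)$, which lies in $\T_{\tilde X_u(x)}\M$ by definition of the logarithmic map. You instead argue directly from the composition $\gamma_{x,s}=\exp_x\circ\bigl(t\mapsto(t-s)v_{s,t}(x)\bigr)$ and the defining fact that any $C^1$ curve valued in $\M$ has velocity in the tangent space at its footpoint.

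Your route is more elementary and more robust: it makes no structural assumption about the curve beyond its being $\M$-valued and $C^1$. The paper's route is more explicit --- it names the derivative as a concrete logarithm --- but the geodesic claim it leans on is delicate: since $v_{s,t}(x)$ depends on $t$, the curve $t\mapsto\exp_x\bigl((t-s)v_{s,t}(x)\bigr)$ need not trace a geodesic through $x$, so the velocity formula $\log/(1-u)$ does not obviously apply. Your argument sidesteps this issue entirely, at the cost of not producing an explicit expression for $\partial_t X_{s,t}(x)$ (which is not needed for the lemma anyway).
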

\begin{proof}[Proof of~\cref{lemma:der_tangent}]
Recall that, for any $x, s, t$, $X_{s,t}(x) = \exp_x((t-s)v_{s,t}(x))$, and, without loss of generality, suppose that $s \leq t$. (The case $t > s$ is perfectly symmetric.) For $s = 1$, $t=1$ and the proof is trivial, as $\vec 0 \in \T_x\M$. Let $s < 1$, and let $\tilde X_{u}(x) \coloneqq X_{s,(1-s)u + s}(x)$, which coincides with $X_{s,t}(x)$ for $(1-s)u + s = t \iff u = \frac{t-s}{1-s} =: \tilde u \in [0,1]$. Remarking that $((1-s)u + s) \in[0,1]$ for all $0 \leq u \leq 1$, we have that $\tilde X$ defines a geodesic between $\tilde X_0(x) = x$ and $\tilde X_1(x) = X_{s,1}(x)$. Now, let $u < 1$, as $u = 0 \implies s = t$, from where the proof is also trivial. Therefore, it follows that $\partial_u \tilde X_u (x) = \log_{ X_{s,u}(x)}( X_{s,1}(x))/(1-u)$, and considering the previous expression for $u = \tilde u$ finalises the proof from the definition of the logarithmic map.
\end{proof}
The above validates the Lagrangian condition. Let us also justify the Eulerian condition: by~\cref{lemma:der_tangent}, $\partial_sX_{s,t}(x_s) \in \T_{X_{s,t}(x_s)}\M$, and let us now prove that $\ddd (X_{s,t})_x[v_s(x)]$ is on the same tangent space.
\begin{lemma}
For any $x \in \M$, $(s,t) \in [0,1]^2$, $\ddd (X_{s,t})_x[v_s(x)] \in \T_{X_{s, t}(x)}\M$.
\label{lemma:euler-just}
\end{lemma}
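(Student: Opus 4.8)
The plan is to deduce the claim almost directly from the background definition of the differential of a $C^1$ self-map of $\M$, the only real work being to certify that $X_{s,t}$ is indeed such a map. First I would note that, for fixed $(s,t)$, the assignment $X_{s,t}\colon\M\to\M$, $x\mapsto \exp_x\left((t-s)\,v_{s,t}(x)\right)$, is well defined: $(t-s)\,v_{s,t}(x)\in\T_x\M$ because $v_{s,t}(p)=\proj_{\T_p\M}(f^\theta_{s,t}(p))$ always lies in the tangent space at its evaluation point, and $\exp_x$ is globally defined since $\M$ is assumed complete. As the composition of the smooth exponential map with the map $p\mapsto(t-s)\,v_{s,t}(p)$, the function $X_{s,t}$ is $C^1$ on $\M$.

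Given this, the background definition states that for a $C^1$ map $f\colon\M\to\M$ the differential at $x$ is a linear map $\ddd f_x\colon\T_x\M\to\T_{f(x)}\M$; taking $f=X_{s,t}$ gives $\ddd(X_{s,t})_x\colon\T_x\M\to\T_{X_{s,t}(x)}\M$. So it remains only to check that $v_s(x)$ is an admissible argument, i.e.\ that $v_s(x)\in\T_x\M$. For the ideal drift this is exactly the identity recorded after \cref{eq:prob_flow_on_manifolds}: $v_s(x)=\E_{x_s\sim\rho_s}[\partial_t I_t\mid I_t=x]=\alpha_s'\,\log_x(x_1)/(1-\alpha_s)\in\T_x\M$, since the logarithmic map takes values in $\T_x\M$; for the parametrised field it is immediate from the projection $v^\theta_{s,s}(x)=\proj_{\T_x\M}(f^\theta_{s,s}(x))$. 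Hence $\ddd(X_{s,t})_x[v_s(x)]\in\T_{X_{s,t}(x)}\M$, as claimed.

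If a more concrete witness is wanted, I would unwind the definition of the differential: choose a $C^1$ curve $\gamma\colon[0,1]\to\M$ with $\gamma(0)=x$ and $\gamma'(0)=v_s(x)$ — for instance $\gamma(\tau)=\exp_x(\tau\,v_s(x))$, which is legitimate by completeness — so that $\ddd(X_{s,t})_x[v_s(x)]=(X_{s,t}\circ\gamma)'(0)$. The curve $\tau\mapsto X_{s,t}(\gamma(\tau))$ lies on $\M$ and passes through $X_{s,t}(x)$ at $\tau=0$, so its velocity there lies in $\T_{X_{s,t}(x)}\M$. I expect the only point needing any care to be the $C^1$ regularity of $X_{s,t}$ as a self-map of $\M$ — equivalently, continuous differentiability of $v_{s,t}$ in its spatial argument — which holds under the standing smoothness assumptions on $f^\theta$ (and on the interpolant data for the ideal field), and is the same regularity already used implicitly in the proof of \cref{lemma:der_tangent}.
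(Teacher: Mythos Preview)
Your proposal is correct and follows essentially the same route as the paper: the paper's proof simply says ``this is true by definition'' and then unwinds the differential via a curve $\gamma$ with $\gamma(0)=x$, $\gamma'(0)=v_s(x)$, observing that $(X_{s,t}\circ\gamma)'(0)\in\T_{X_{s,t}(x)}\M$ --- which is exactly your ``concrete witness'' paragraph. Your additional remarks on the $C^1$ regularity of $X_{s,t}$ and on why $v_s(x)\in\T_x\M$ are more thorough than the paper's treatment, but do not change the underlying argument.
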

\begin{proof}[Proof of~\cref{lemma:euler-just}]
This is true by definition. To see this, let $\gamma:[0,1]\to\M, \tau\mapsto \gamma(\tau)$ be a curve, such that $\gamma(0) = x$ and $\partial_\tau \gamma = v_s$. We therefore have that $X_{s,t} \circ \gamma:[0,1]\to\M$, and therefore $\partial_\tau(X_{s,t}\circ\gamma)(0) \in \T_{X_{s, t}(\gamma(0))}\M = \T_{X_{s, t}(x)}\M$. Knowing that $(X_{s,t})_x v_s(x) = \partial_\tau (X_{s,t}\circ\gamma)(0)$ concludes the proof.
\end{proof}
We can now prove the main proposition.
\begin{proof}[Proof of~\cref{prop:flow_map_char}]
Let us first prove the Lagrangian condition. It comes naturally as
\begin{equation}
    X_{s,t}(x_s) = x_t \implies \partial_t X_{s,t}(x_s) = \partial_t x_t = v_t(x),
\end{equation}
which is true by definition, since $(x_t)_{t\in[0,1]}$ is a solution of \cref{eq:prob_flow_on_manifolds}. As for the Eulerian condition, we note too that, for any $x \in \M$, $X_{s,t}(X_{t,s}(x)) = x$. (So, it is also invertible.) Taking the derivative through $s$,
\begin{gather}
    \partial_s X_{s,t}(X_{t,s}(x)) = \partial_s x\\
    \implies \partial_s X_{s,t}(X_{t,s}(x)) + (\ddd X_{s,t})_{X_{t,s}(x)}[\partial_s X_{t,s}(x)] = 0,
\end{gather}
where the last line is due to the chain rule on manifolds. Moreover, by the probability flow ODE, we have that $\partial_sX_{t,s}(x) = v_s(X_{t,s}(x))$, and, letting $y = X_{t,s}(x)$, we have that
\begin{equation}
    \partial_s X_{s,t}(y) + (\ddd X_{s,t})_y[v_s(y)] = 0.
\end{equation}
Since $X_{s,t}$ is invertible for any $s$ and $t$ and defined on $\M$, it is bijective, and thus we can freely rename $y$ into $x$ to conclude that the Eulerian condition holds. Finally, the semigroup condition is trivially true from the definition: $(X_{u,t}\circ X_{s, u})(x_s) = x_t = X_{s,t}(x_s)$.
\end{proof}

\subsection{Proof for~\cref{lemma:generalisedtangentcondition}}
\label{app:lemma_generalised_tangent_condition_proof}

\begin{mdframed}[style=MyFrame2]
\generalisedtangentcondition*
\end{mdframed}
\begin{proof}{Proof of~\cref{lemma:generalisedtangentcondition}.}
This property follows from the Lagrangian condition. We have that $\partial_t X_{s,t}(x_s) = v_t(X_{s,t}(x_s))$. Taking the limit on both sides, we find that
\begin{gather}
    \lim_{s\to t}\partial_t X_{s,t}(x_s) = \lim_{s\to t}v_t(X_{s,t}(x_s))\\
    \implies \lim_{s\to t}\partial_t X_{s,t}(x_s) = v_t\left(\lim_{s\to t}X_{s,t}(x_s)\right) = v_t(x_t),
\end{gather}
where the last line comes from the continuity of the limit.
\end{proof}
\begin{lemma}
For any $(s,t) \in [0,1]^2$ and $x \in \M$, $\lim_{s\to t}\partial_t X_{s,t}(x) = v_{t,t}(x)$.
\label{lemma:der_flowmap}
\end{lemma}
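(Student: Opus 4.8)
The plan is to differentiate the chosen parametrisation $X_{s,t}(x) = \exp_x\big((t-s)\,v_{s,t}(x)\big)$ directly with respect to $t$, holding $x$ and $s$ fixed, and then let $s\to t$. Write $w(s,t) \coloneqq (t-s)\,v_{s,t}(x)$, which lies in the fixed vector space $\T_x\M$ since $v^\theta$ is constrained to be tangent at its base point. Applying the chain rule through $\exp_x$ and then the product rule inside $\T_x\M$ gives
\begin{equation}
    \partial_t X_{s,t}(x) = \ddd(\exp_x)_{w(s,t)}\big[\partial_t w(s,t)\big], \qquad \partial_t w(s,t) = v_{s,t}(x) + (t-s)\,(\partial_t v_{s,t})(x).
\end{equation}
Here $\partial_t v_{s,t}(x)$ is again an element of $\T_x\M$, since it is the derivative of a curve that stays in that fixed subspace.

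Next I would take $s\to t$. Under the smoothness/continuity hypotheses on $v_{s,t}(x)$ in both time arguments (which hold for the parametrising network), $v_{s,t}(x)\to v_{t,t}(x)$ and $\partial_t v_{s,t}(x)$ stays bounded near the diagonal, so $w(s,t)\to 0$ in $\T_x\M$ and $\partial_t w(s,t)\to v_{t,t}(x)$. The last ingredient is the standard Riemannian fact that $\ddd(\exp_x)_0 = \mathrm{Id}_{\T_x\M}$, together with the joint continuity of $(w,u)\mapsto \ddd(\exp_x)_w[u]$ (valid because completeness of $\M$ makes $\exp_x$ globally smooth), which lets us pass the limit inside:
\begin{equation}
    \lim_{s\to t}\partial_t X_{s,t}(x) = \ddd(\exp_x)_{0}\big[v_{t,t}(x)\big] = v_{t,t}(x).
\end{equation}

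The main obstacle is precisely this last interchange: one must confirm that $w(s,t)$ remains in a neighbourhood of the origin on which $\exp_x$ and its differential behave continuously, and that $\partial_t w(s,t)$ converges — both of which reduce to continuity of $v$ and its $t$-derivative plus completeness, so the work is routine but should be spelled out rather than asserted. Finally I would note that this lemma, combined with \cref{lemma:generalisedtangentcondition} (which gives $\lim_{s\to t}\partial_t X_{s,t}(x_t) = v_t(x_t)$ along solutions of \cref{eq:prob_flow_on_manifolds}), yields the identification $v_{t,t} = v_t$ used throughout the section, since the two limits must agree at every point reached by the probability flow.
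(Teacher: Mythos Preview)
Your argument is correct and matches the paper's proof essentially line for line: both differentiate the parametrisation $X_{s,t}(x)=\exp_x((t-s)v_{s,t}(x))$ via the chain rule to obtain $\ddd(\exp_x)_{(t-s)v_{s,t}(x)}\big[v_{s,t}(x)+(t-s)\partial_t v_{s,t}(x)\big]$, and then invoke $\ddd(\exp_x)_{0}=\mathrm{Id}_{\T_x\M}$ to pass to the limit. Your version is simply more explicit about the continuity assumptions justifying the limit interchange, which the paper leaves implicit.
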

\begin{proof}
Following elementary Riemannian geometry, we find that
\begin{equation}
    \partial_t X_{s,t}(x) = \partial_t \exp_x((t-s)v_{s,t}(x)) = \ddd(\exp_x)_{(t-s)v_{s,t}(x)}(v_{s,t}(x) + (t-s)\partial_t v_{s,t}(x)).
\end{equation}
Knowing that $\ddd(\exp_x)_{\vec 0} = \mathrm{Id}_{\T_x\M}$, and taking the limit, we find the desired conclusion.
\end{proof}

\subsection{Generalised Self-Distillation Losses}
\label{app:proofs_generalised_losses}
\cut{
\begin{lemma}
\label{lemma:der_vec_tangent}
For any $x \in \M$, $s,t\in[0,1]^2$, $\partial_t v_{s,t}(x) \in \T_x\M$.
\end{lemma}
\begin{proof}[Proof of~\cref{lemma:der_vec_tangent}]
Similarly to the above, suppose, without loss of generality, that $s \leq t$. Knowing that $\T_x\M$ is a vector space for any $x \in \M$, let $\T_x\M = \mathrm{span} \{e_1, \ldots, e_d\}$ for $d = \dim \M$ be a basis thereof. Let $x \in \M$, and for any $0 \leq t \leq 1$ let $\tilde v(t) = v_{s,t}(x)$. Since $\tilde v(\cdot) = v_{s,\cdot}(x) \in \T_x\M$, we can write that $\tilde v(t) = \sum_{i=1}^d a_i(t) e_i$ for some functions $a_1,\ldots,a_d\in\mathrm{C}^1([0,1], \R)$. Evidently, we then have that $\partial_t \tilde v(t) = \sum_{i=1}^d \partial_t a_i(t) e_i \in \T_x \M$ as it is also a linear combination of the basis vectors.
\end{proof}
}
\begin{mdframed}[style=MyFrame2]
\propglsd*
\end{mdframed}
\begin{proof}
The loss is zero if and only if $\partial_t X_{s,t}^\theta(I_s) = v_{t,t}^\theta(X_{s,t}^\theta(I_s))$ almost everywhere. We can conclude by~\cref{prop:gfm_characterization}.
\end{proof}

\begin{mdframed}[style=MyFrame2]
\propgesd*
\end{mdframed}
\begin{proof}
The loss is zero if and only if $X_{s,t}^\theta(x_s) - \ddd(X_{s,t}^\theta)_{x_s}[v_s^\theta(x_s)] = 0$ almost everywhere. We can conclude by~\cref{prop:gfm_characterization}.
\end{proof}

\begin{mdframed}[style=MyFrame2]
\propgpsd*
\end{mdframed}
\begin{proof}
The loss is zero if and only if $X_{u,t}^\theta(X_{s,u}^\theta(x)) = X_{s,t}^\theta(x)$ almost everywhere. We can conclude by~\cref{prop:gfm_characterization}.
\end{proof}

\subsection{Connections to existing methods}
\label{app:connection_existing_methods}
\subsubsection{Generalised Mean Flows}
Generalising Mean Flows directly is non-trivial as it requires defining the integral of a vector field on a manifold properly, which would involve parallel transport and therefore derivatives thereof. Also, Mean Flows operate on the vector field level as opposed to Flow Map Matching which operates on the level of the flow map. It is difficult to go from one level to another directly, as it will involve non-trivial curvature terms. Instead, we propose to heuristically follow our derivations in~\cref{app:flow_map_losses_original}, in the ``stopgradients'' section. Indeed, we can see that our loss involves the instantaneous vector field of the modelled flow map, $v^\theta_{t,t}$, as opposed to the ideal flow $\partial_t I_t$; hence the use of the Levi-Civita connection along $v_s(I_s)$ instead of the differential evaluated at $v_{s,s}^\theta$:
\begin{equation}
\hat \gL_{\mathrm{G\text{-}MF}}({\theta}) = \E_{t,s, (x_0,x_1)}\left[\left\lVert v_{s,t}^\theta(I_s) - \sg\left(\partial_s I_s - (t - s) \nabla_{v_s(I_s)}v^\theta_{s,t}(I_s)\right) \right\rVert_g^2\right],
\end{equation}
which indeed recovers the Euclidean case as a special case. The Levi-Civita connection, $\nabla:\mathfrak{X}(\M)\times \mathfrak{X}(\M) \to\mathfrak{X}(\M)$, $(v, X)\mapsto \nabla_v X$, is the unique torsion-free, bilinear, metric compatible connection on $(\M, g)$ that respects the Leibniz rule in $X$, and it defines a notion of covariant derivative for the vector fields on $(\M, g)$.

\cut{ we follow the steps in~\cref{app:flow_map_losses_original}. We start by expanding the term:
\begin{align}
    \partial_s X_{s,t}^\theta(x) &= \ddd(\exp_x)_{(t-s)v_{s,t}^\theta(x)}(-v_{s,t}^\theta(x) + (t-s)\partial_s v_{s,t}^\theta(x)).
\end{align}}

\subsubsection{Generalised Shortcut Models}
Shortcut models~\citep{shortcut} are exactly trained to enforce the semigroup property but on a discrete time grid. Expressed in our notation, the loss amounts to
\begin{equation}
\gL(\theta) = \gL_\mathrm{FM}(\theta) + \E_{s,d,(x_0,x_1)}\left\lVert v_{s,s+2d}(x_s) - \sg\left(v_{s+d,s+2d}(x_s + dv_{s,s+d}(x_{s}) \right)\right\rVert_2^2.
\end{equation}
Indeed, $d$ can be seen as the time difference $t-s$, and, letting $d$ be uniformly distributed, we recover exactly Euclidean Progressive self-distillation, as noted in~\citet{boffi2024flow}. The Riemannian case is strictly analogous, where the + operation is replaced by the exponential map, and the appropriate manifold distance is used.

\section{Additional Experimental Details}
\label{app:additional_experimental_details}
{
\subsection{The relevance of NLL}
\label{app:nll_relevance}
The NLL is a standard generative modelling benchmark for generative models~\citep{rfm,flowmatching,song2020score}. It is true, however, that in our context this metric does not evaluate directly the quality of our learnt GFMs. It is still relevant in at least two important ways:
\begin{enumerate}
    \item It shows that the instantaneous vector field is still well-learnt, despite the self-distillation loss, which could have impacted the training dynamics on this part of the loss. We show that it does not. So, overall, our objective did not “sacrifice” the instantaneous vector field, and the sum of the self-distillation and flow matching losses are not inherently “incompatible”.

    \item The flow map and its learnt instantaneous vector field are approximations of one another. Therefore, while the likelihood of one is not \emph{exactly} equal to that of the other, the difference between the two being typically very low (as measured by the self-distillation losses, down to $10^{-5}$ in MSE), the instantaneous vector field allows us to compute a good approximation of the likelihood induced by the flow map.
\end{enumerate}
}

\subsection{Empirical MMD calculation}
For any distributions $p$ and $q$ with support $(\M, g)$, and with respective independent samples $(p_i)_{1 \leq i \leq n}$ and $(q_i)_{1 \leq i \leq n}$,
\begin{equation}
\label{eq:mmd}
    \widehat \MMD(p, q) \coloneqq \frac 1 {n^2} \sum_{i,j=1}^n \exp\left(-\kappa d_g^2(p_i, p_j) \right) + \exp\left(-\kappa d_g^2(q_i, q_j) \right) - 2 \exp\left(-\kappa d_g^2(p_i, q_j) \right).
\end{equation}
\looseness=-1
We choose $n$ to be equal to the size of the test set.

\subsection{Additional results}
{\cref{fig:other_earth} includes additional (all) qualitative results on the Earth datasets.}

\begin{figure}[htp]
  \centering
  \caption{Plots of densities for the various datasets and all compared methods. Depicted in red are the test-set samples. Datasets from left to right: volcano, earthquake, flood, fire.}
  \vspace{-5pt}
  \label{fig:other_earth}
  \resizebox{\linewidth}{!}{
  \begin{tabular}{m{0.1\linewidth}m{0.9\linewidth}}
    \ourspsd
      & \includegraphics[width=\linewidth]{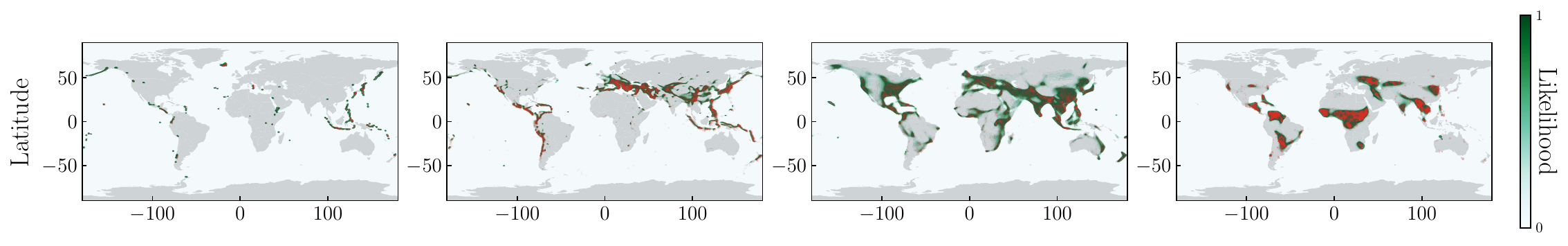} \\
    \oursesd
      & \includegraphics[width=\linewidth]{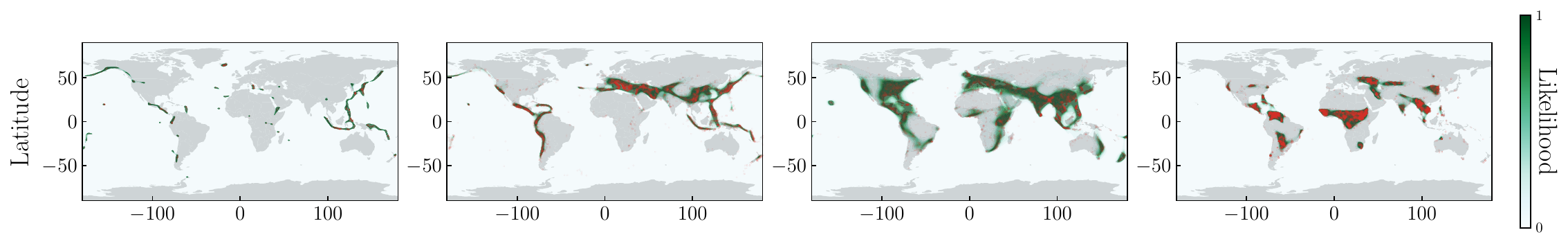} \\
    \raisebox{1.9ex}{\oursmf}
      & \includegraphics[width=\linewidth]{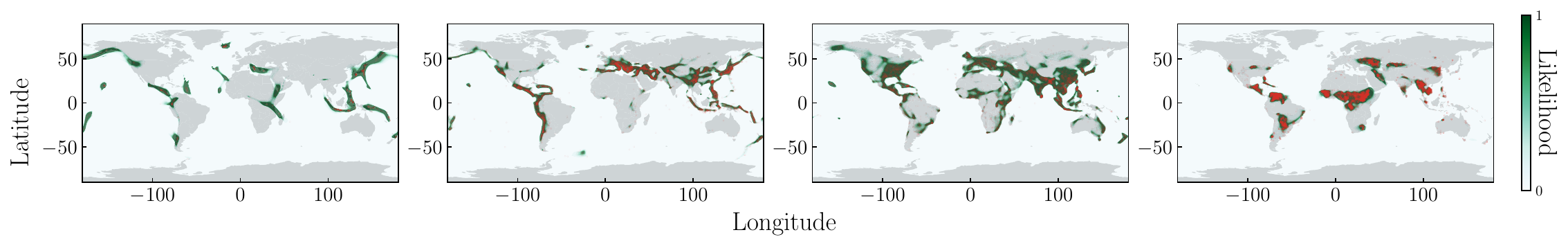}
  \end{tabular}
  }
\end{figure}

{
\subsection{The computational cost of self-distillation}
\label{app:compute}
We include, here,~\cref{tab:runtime_training} exhibiting the differences in training run-times (wall clock time) between our GFM objectives and plain RFM; the numbers reported have been gathered on the Volcano dataset, which ran the largest model (and therefore is arguably most relevant; about $45$M parameters). The gradation is rather intuitive: RFM has a simple flow matching loss; G-ESD computes a spatial Jacobian of a large mode; G-LSD only computes a time derivative of the same model; and G-PSD requires two additional forward passes without any gradients, compared to RFM.
\begin{table}[tbp]
    \centering
    \resizebox{\linewidth}{!}{
    \begin{tabular}{c|c|c|c}
        \toprule
         \textbf{RFM} & \textbf{G-ESD} & \textbf{G-LSD} & \textbf{G-PSD} \\
         \midrule
         $0.60\pm0.07$ epochs/min & $0.29\pm0.01$ epochs/min & $0.36\pm 0.02$ epochs/min& $0.43\pm0.04$ epochs/min\\
         \bottomrule
    \end{tabular}
    }
    \caption{Wall clock  throughput time of our proposed methods on the volcano dataset (higher is faster).}
    
    \label{tab:runtime_training}
\end{table}
}

\end{document}